\newcommand{\blind}{1}
\newcommand\independent{\protect\mathpalette{\protect\independenT}{\perp}}
\def\independenT#1#2{\mathrel{\rlap{$#1#2$}\mkern2mu{#1#2}}}
\newcommand{\T}{\intercal}
\newcommand{\OMIT}[1]{\relax}   
\def\text{{\rm}}
 \newcommand{\bma}[1]{\mbox{\boldmath $#1$}}
 \newcommand{\bS}{ {\bma{S}} }
 \newcommand{\bs}{ {\bma{s}} }
 \newcommand{\bW}{ {\bma{W}} }
\newtheorem{thm}{Theorem}[section]
\newtheorem{lem}[thm]{Lemma}
\theoremstyle{definition}
\newtheorem{rmrk}[thm]{Remark}
\newtheorem{assumption}{Assumption}
\begin{document}

\def\spacingset#1{\renewcommand{\baselinestretch}%
{#1}\small\normalsize} \spacingset{1}

\begin{center}
  \textbf{Estimating Dynamic Treatment Regimes in Mobile Health Using V-learning} \\
	\vspace{0.1in}
  \textbf{ Daniel J.\ Luckett$^{1}$, Eric B.\ Laber$^{2}$, Anna R.\ Kahkoska$^{3}$, \\ 
	David M.\ Maahs$^{4}$, Elizabeth Mayer-Davis$^{3}$, Michael R.\ Kosorok$^{1}$}
  \\
	\vspace{0.1in}
$^1$ Department of Biostatistics, University of North Carolina, Chapel
  Hill, NC 27599 \\ 
$^2$ Department of Statistics, North Carolina State University,
  Raleigh, NC 27695 \\
$^3$ Department of Nutrition, University of North Carolina, Chapel
  Hill, NC 27599 \\ 	
$^4$ Department of Pediatrics, Stanford University, Stanford, CA 94305	
\end{center}

\begin{abstract}
The vision for precision medicine is to use individual patient 
characteristics to inform a personalized treatment plan that 
leads to the best possible healthcare for each patient.  Mobile 
technologies have an important role to play in this vision as they 
offer a means to monitor a patient's health status in real-time 
and subsequently to deliver interventions if, when, and in the 
dose that they are needed.   Dynamic treatment regimes 
formalize individualized treatment plans as sequences of 
decision rules, one per stage of clinical intervention, that 
map current patient information to a recommended treatment. 
However, most existing methods for estimating optimal dynamic treatment 
regimes are designed for a small number of fixed decision points occurring 
on a coarse time-scale.  We propose a new reinforcement learning 
method for estimating an optimal treatment regime that is applicable 
to data collected using mobile technologies in an outpatient setting. 
The proposed method accommodates an indefinite time horizon and 
minute-by-minute decision making that are common in mobile health 
applications.  We show that the proposed estimators are consistent 
and asymptotically normal under mild conditions.  The proposed 
methods are applied to estimate an optimal dynamic treatment regime 
for controlling blood glucose levels in patients with type 1 diabetes.
\end{abstract}

\noindent%
{\it Keywords:} Markov decision processes, Precision medicine, Reinforcement learning, Type 1 diabetes
\vfill

\newpage
\spacingset{1} 

\section{Introduction} \label{vl.intro}
The use of mobile devices in clinical care, called mobile health (mHealth), provides an 
effective and scalable platform to assist patients in managing their illness 
\citep[][]{free2013effectiveness, steinhubl2013can}. 
Advantages of mHealth interventions include real-time communication between a 
patient and their health-care provider as well as systems for delivering 
training, teaching, and social support \citep[][]{kumar2013mobile}. 
Mobile technologies can also be used to collect rich longitudinal data 
to estimate optimal dynamic treatment regimes and to deliver treatment that 
is deeply tailored to each individual patient. 
We propose a new estimator of an optimal treatment regime that is 
suitable for use with with longitudinal data collected in mHealth applications.

A dynamic treatment regime provides a framework to administer individualized treatment over time through 
a series of decision rules. Dynamic treatment regimes 
have been well-studied in the statistical and biomedical literature 
\citep[][]{murphy2003optimal, robins2004optimal, moodie2007demystifying, kosorok2015adaptive, chakraborty2013statistical} 
and furthermore, statistical considerations in mHealth have been studied by, for example, \cite{liao2015sample} 
and \cite{klasnja2015microrandomized}. 
Although mobile technology has been successfully utilized in clinical areas such as 
diabetes \citep[][]{quinn2011cluster, maahs2012outpatient}, 
smoking cessation \citep[][]{ali2012mpuff}, and 
obesity \citep[][]{bexelius2010measures}, 
mHealth poses some unique challenges that preclude direct application of 
existing methodologies for dynamic treatment regimes. For example, mHealth  
applications typically involve a large number of time points per individual and no definite time horizon; 
the momentary signal may be weak and may not directly measure the outcome of interest; and estimation 
of optimal treatment strategies must be done online as data accumulate. 

This work is motivated in part by our involvement in a study of mHealth as 
a management tool for type 1 diabetes. 
Type 1 diabetes is an autoimmune disease wherein the pancreas produces insufficient levels of insulin, 
a hormone needed to regulate blood glucose concentration.
Patients with type 1 diabetes are continually engaged in management activities including
monitoring glucose levels, timing and dosing insulin injections, and 
regulating diet and physical activity. 
Increased glucose monitoring and attention to self-management 
facilitate more frequent treatment adjustments and have been shown 
to improve patient outcomes \citep[][]{levine2001predictors, haller2004predictors, ziegler2011frequency}. 
Thus, patient outcomes have the potential to 
be improved by diabetes management tools which are deeply tailored to the 
continually evolving health status of each patient. 
Mobile technologies can be used to collect data on physical activity, glucose, and insulin at a fine granularity
in an outpatient setting \citep[][]{maahs2012outpatient}. 
There is great potential for using these data to create comprehensive and accessible 
mHealth interventions for clinical use. 
We envision application of this work for use before 
the artificial pancreas \citep[][]{weinzimer2008fully, kowalski2015pathway, bergenstal2016safety} 
becomes widely available. 

The sequential decision making process can be modeled as a Markov decision process 
\citep[][]{puterman2014markov} and the optimal treatment regime can 
be estimated using reinforcement learning algorithms such as Q-learning 
\citep[][]{murphy2005generalization, zhao2009reinforcement, tang2012developing, schulteOLD}. 
\cite{ertefaie2014constructing} proposed a variant of greedy gradient Q-learning (GGQ) 
to estimate optimal dynamic treatment regimes in infinite horizon 
settings \citep[see also][]{maei2010toward}. 
In GGQ, the form of the estimated Q-function dictates the 
form of the estimated optimal treatment regime. Thus, one must choose between 
a parsimonious model for the Q-function at the risk of model misspecification 
or a complex Q-function that yields unintelligible treatment regimes.  
Furthermore, GGQ requires modeling a non-smooth function of the data, which creates complications
\citep[][]{laber2014interactive, linn2017interactive}. 
We propose an alternative estimation method for infinite horizon dynamic treatment regimes 
that is suited to mHealth applications. Our approach, which we call V-learning, 
involves estimating the optimal policy among a prespecified class of policies 
\citep[][]{zhang2012robust, zhang2013robust}. 
It requires minimal assumptions about the data-generating process 
and permits estimating a randomized decision rule that can be implemented online 
as data accumulate. 

In Section~\ref{offline}, we describe the setup and present our method 
for offline estimation using data from a micro-randomized trial or observational study. 
In Section~\ref{online}, we extend our method for application to online estimation with accumulating data. 
Theoretical results, including consistency and asymptotic normality of the proposed estimators, 
are presented in Section~\ref{vl.theory}. We compare the proposed 
method to GGQ using simulated data in Section~\ref{vl.sim}. 
A case study using data from patients 
with type 1 diabetes is presented in Section~\ref{vl.data} 
and we conclude with a discussion in Section~\ref{vl.conc}. 
Proofs of technical results are in the Appendix. 

\section{Offline estimation from observational data} \label{offline}
We assume that the available data are 
$\left\lbrace \left( \bS_i^1, A_i^1, \bS_i^2,\ldots, \bS_{i}^{T_i}, 
    A_{i}^{T_i}, \bS_{i}^{T_i+1} \right) \right\rbrace_{i=1}^{n}$, 
which comprise $n$ independent, identically distributed trajectories 
$\left(\bS^1, A^1, \bS^2, \ldots, \bS^T, A^T, \bS^{T+1}\right)$, 
where: $\bS^t \in \mathbb{R}^p$ denotes a 
summary of patient information collected up to and including time $t$; 
$A^t \in \mathcal{A}$ denotes the treatment assigned at
time $t$; and $T \in \mathbb{Z}_{+}$ denotes the (possibly random)
patient follow-up time. In the motivating example of type 1 diabetes, 
$\bS^t$ could contain a patient's blood glucose, dietary intake, and 
physical activity in the hour leading up to time $t$ and $A^t$ could denote 
an indicator that an insulin injection is taken at time $t$. 
We assume that the data-generating model is a
time-homogeneous Markov process so that
$\bS^{t+1} \independent (A^{t-1}, \bS^{t-1},\ldots, A^1, \bS^1) \big| (A^t,
\bS^t)$ and the conditional density $p(\bs^{t+1} | a^t, \bs^t)$ 
is the same for all $t \ge 1$. 
Let $L^{t}\in \lbrace 0, 1\rbrace$ denote 
an indicator that the patient is still in follow-up at time $t$, i.e.,
$L^t = 1$ if the patient is being followed at time $t$ and zero
otherwise.   We assume that $L^t$ is contained in $\bS^t$ so that
$P(L^{t+1} = 1|A^t, \bS^t,\ldots, A^1, \bS^1) = P(L^{t+1}=1 | A^t,
\bS^t)$ and $L^t = 0$ implies $L^{t+1} = 0$ with probability one.  
Furthermore, we assume a known utility function
$u:\mathbb{R}^p \times \mathcal{A}\times \mathbb{R}^p\rightarrow
\mathbb{R}$
so that $U^t = u(\bS^{t+1}, A^{t} , \bS^{t})$ measures the
`goodness' of
choosing treatment $A^t$ in state $\bS^t$ and subsequently
transitioning to state $\bS^{t+1}$. In our motivating example, 
the utility at time $t$ could be a measure of 
how far the patient's average blood glucose concentration deviates 
from the optimal range over the hour preceding and following time $t$. 
The goal is to select treatments
to maximize expected cumulative utility; treatment selection is
formalized using a treatment regime \citep[][]{schulteOLD, kosorok2015adaptive} 
and the utility associated with
any regime is defined using potential outcomes \citep[][]{rubin}.

Let $\mathcal{B}(\mathcal{A})$ denote the space of probability
distributions over $\mathcal{A}$.  A treatment regime in this context
is a function
$\pi:\,\mathrm{dom}\,\bS^t\rightarrow \mathcal{B}(\mathcal{A})$ so that, under
$\pi$, a decision maker presented with state $\bS^t=\bs^t$ at time $t$
will select action $a^t\in\mathcal{A}$ with probability
$\pi(a^t;\bs^t)$.  
Define $\overline{a}^{t} = (a^1,\ldots, a^t)\in\mathcal{A}^t$,  and
$\overline{a}^{\infty} = (a^1, a^2,\ldots)\in\mathcal{A}^{\infty}$.
The set of potential outcomes is
\begin{multline*}
\bW^* =  \Big\lbrace
\bS^1, \bS^{*2}(a^1), \ldots, 
\bS^{*T^*(\overline{a}^{\infty})}(\overline{a}^{T^*(\overline{a}^{\infty}) - 1})
\,:\, \\ T^*(\overline{a}^{\infty}) 
= \inf\left\lbrace t \ge 1\,:\, L^{*t}(\overline{a}^{t-1}) = 0
\right\rbrace,\,\overline{a}^{\infty}\in \mathcal{A}^{\infty}
\Big\rbrace,
\end{multline*}
where $\bS^{*t}(\overline{a}^{t-1})$ is the potential state and
$L^{*t}(\overline{a}^{t-1})$ is the potential follow-up status at time
$t$ under treatment sequence $\overline{a}^{t-1}$. Thus, the
potential utility at time $t$ is $U^{*t}(\overline{a}^t) = 
u\left\lbrace
\bS^{*(t+1)}(\overline{a}^t), a^t, \bS^{*t}(\overline{a}^{t-1})
\right\rbrace$.  For any $\pi$, define $\left\lbrace \xi_{\pi}^t(\cdot)
\right\rbrace_{t\ge 1}$ to be a sequence of independent, $\mathcal{A}$-valued 
 stochastic processes indexed by $\mathrm{dom}\,\bS^t$ such that 
$P\left\lbrace \xi_{\pi}^t(\bs^t) = a^t\right\rbrace = \pi(a^t;\bs^t)$.  
The potential follow-up time under $\pi$ is
\begin{equation*}
T^*(\pi) = \sum_{t \ge 1} \sum_{\overline{a}^t \in \mathcal{A}^t} 
t 1\left\{ \mathrm{sup}_{\underline{a}^{t+1}} T^*(\overline{a}^t, \underline{a}^{t+1}) = t \right\}
\prod_{v=1}^t 1\left[\xi_\pi^{v}\left\lbrace\bS^{*v}(\overline{a}^{v-1})
\right\rbrace = a^v\right],
\end{equation*}
where $\underline{a}^{t+1} = (a^{t+1}, a^{t+2}, \ldots)$. 
The potential utility under $\pi$ at time $t$ is
\begin{equation*}
U^{*t}(\pi) =
\left\lbrace
\begin{array}{ll}
\sum_{\overline{a}^{t}\in\mathcal{A}^t}
U^{*t}\left(\overline{a}^t\right)\prod_{v=1}^t
1\left[
\xi_{\pi}^{v}\left\lbrace
\bS^{*v}(\overline{a}^{v-1})
\right\rbrace = a^v
\right], & \,\,\,\,\mathrm{if}\,\,\,\,T^*(\pi) \ge t \\
0,       & \,\,\,\,\mathrm{otherwise}, 
\end{array}
\right.
\end{equation*}  
where $\bS^{*1}(\overline{a}^0) = \bS^1$. 
Thus, utility is set to zero after a patient is lost to follow-up. 
However, in certain situations, utility may be constructed so as to 
take a negative value at the time point when the patient is lost to follow-up, 
e.g., if the patient discontinues treatment because of a negative effect 
associated with the intervention. 
Define the state-value function
$V(\pi, \bs^t) = \mathbb{E}\left\{ \sum_{k \ge
    0}\gamma^{k}U^{*(t+k)}(\pi)\big|\bS^t=\bs^t \right\}$
\citep[][]{sutton},
where $\gamma\in (0,1)$ is a fixed constant that captures the
trade-off between short- and long-term outcomes.  For any distribution
$\mathcal{R}$ on $\mathrm{dom}\,\bS^1$, 
define the value function with
respect to reference distribution $\mathcal{R}$ as
$V_{\mathcal{R}}(\pi) = \int V(\pi, \bs)\mathrm{d}\mathcal{R}(\bs)$;
throughout, we assume that this reference distribution is fixed. 
The reference distribution can be thought of as a distribution 
of initial states and we estimate it from the data in the 
implementation in Sections~\ref{vl.sim} and \ref{vl.data}. 
For a prespecified class of regimes, $\Pi$, the optimal
regime, $ \pi_{\mathcal{R}}^{\mathrm{opt}} \in \Pi$, satisfies
$V_{\mathcal{R}}(\pi_{\mathcal{R}}^{\mathrm{opt}}) \ge
V_{\mathcal{R}}(\pi)$ for all $\pi\in\Pi$.

To construct an estimator of $\pi_{\mathcal{R}}^{\mathrm{opt}}$, we
make a series of assumptions that connect the potential outcomes 
in $\bW^*$  with the data-generating model. 
\begin{assumption} \label{ignorability.assume}
Strong ignorability, $A^t \independent \bW^* \big| \bS^t$ for all $t$.
\end{assumption}
\begin{assumption} \label{consistency.assume}
Consistency, $\bS^t = \bS^{*t}(\overline{A}^{t-1})$ for all $t$ and
$T = T^*(\overline{A}^{\infty})$.
\end{assumption}
\begin{assumption} \label{positivity.assume}
Positivity, there
exists $c_0 > 0$ so that $P(A^t=a^t|\bS^t=\bs^t) \ge c_0$
for all $a^t\in\mathcal{A}$, $\bs^t \in \mathrm{dom}\,\bS^t$, and
all $t$.
\end{assumption} 
In addition, we implicitly assume that there is no interference among 
the experimental units.  These assumptions are common in the
context of estimating dynamic treatment regimes
\citep[][]{robins2004optimal, schulteOLD}. 
Assumptions~\ref{ignorability.assume} and \ref{positivity.assume} hold by construction in a 
micro-randomized trial \citep{klasnja2015microrandomized, liao2015sample}. 

Let $\mu^t(a^t; \bs^t) = P(A^t=a^t|\bS^t=\bs^t)$ for each $t\ge 1$. 
In a micro-randomized trial, $\mu^t(a^t; \bs^t)$ is a known 
randomization probability; in an observational study, it 
must be estimated from the data. 
The following lemma characterizes $V_{\mathcal{R}}(\pi)$ for any 
regime, $\pi$, in terms of the data-generating model 
\citep[see also Lemma~4.1 of][]{murphy2001marginal}.  
A proof is provided in the appendix.
\begin{lem}\label{lemma}
Let $\pi$ denote an arbitrary regime and  $\gamma \in
(0,1)$ a discount factor. Then, under 
assumptions~\ref{ignorability.assume}-\ref{positivity.assume} and 
provided interchange of the sum and integration 
is justified, the state-value function of $\pi$ at $\bs^t$ is 
\begin{equation}\label{vRPiIPW}
V(\pi, \bs^t) = \sum_{k\ge 0}\mathbb{E}\left[
\gamma^{k}U^{t+k}\left\lbrace
\prod_{v=0}^k\frac{\pi(A^{v+t}; \bS^{v+t})}{\mu^{v+t}(A^{v+t}; \bS^{v+t})}
\right\rbrace \bigg|\bS^{t}=\bs^t
\right].
\end{equation}
\end{lem}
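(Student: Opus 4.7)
The plan is to justify interchanging the sum and expectation by the hypothesis of the lemma, then prove, for each fixed $k \ge 0$, the pointwise identity
$$
\mathbb{E}\!\left[U^{*(t+k)}(\pi) \,\big|\, \bS^t = \bs^t\right]
= \mathbb{E}\!\left[U^{t+k}\prod_{v=0}^{k}\frac{\pi(A^{v+t};\bS^{v+t})}{\mu^{v+t}(A^{v+t};\bS^{v+t})} \,\bigg|\, \bS^t = \bs^t\right],
$$
after which summing over $k$ with the $\gamma^k$ weights yields the conclusion. I would prove this pointwise identity by induction on $k$.

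For the base case $k=0$, I would expand the definition $U^{*t}(\pi) = \sum_{a \in \mathcal{A}} U^{*t}(a)\, \mathbf{1}[\xi_\pi^t(\bS^t) = a]$ (the higher-order products are trivial since $t \ge 1$ gives only the $v=t$ factor here), then use the fact that $\xi_\pi^t$ is, by construction, independent of $\bW^*$ to pull $\pi(a;\bs^t)$ out. Applying Assumption~\ref{ignorability.assume} (strong ignorability) allows me to replace $\mathbb{E}[U^{*t}(a)\mid \bS^t=\bs^t]$ by $\mathbb{E}[U^{*t}(a)\mid A^t = a, \bS^t=\bs^t]$, and then Assumption~\ref{consistency.assume} identifies this with $\mathbb{E}[U^t\mid A^t=a,\bS^t=\bs^t]$. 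A final reweighting $\pi/\mu$ multiplied and divided by $\mu^t(a;\bs^t)$—legitimate under Assumption~\ref{positivity.assume}—converts the sum over $a$ weighted by $\mu^t$ back into a conditional expectation over the observed $A^t$, producing the right-hand side for $k=0$.

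For the inductive step, assuming the identity holds at level $k-1$ starting from any state, I would condition on $(\bS^t, A^t, \bS^{t+1})$ and apply the tower property to write
$$
\mathbb{E}\!\left[U^{*(t+k)}(\pi) \,\big|\, \bS^t=\bs^t\right]
= \mathbb{E}\!\left[\mathbb{E}\{U^{*(t+k)}(\pi) \mid \bS^{t+1}\} \,\big|\, \bS^t=\bs^t\right].
$$
The time-homogeneous Markov assumption, together with the independence of the $\{\xi_\pi^v\}_{v \ge t+1}$ from histories up to time $t$, lets me apply the inductive hypothesis to the inner expectation at starting state $\bS^{t+1}$ with horizon $k-1$, obtaining the product $\prod_{v=1}^{k}\pi/\mu$ factor inside. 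Then the base-case argument at time $t$ (once more invoking ignorability, consistency, positivity, and the independence of $\xi_\pi^t$) appends the $v=0$ factor and completes the induction.

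The main obstacle I expect is bookkeeping around the loss-to-follow-up indicator $L^t$ and the convention that $U^{*t}(\pi) = 0$ once $T^*(\pi) < t$: I must verify that the IPW identity continues to hold across the truncation, which follows because $L^{t+1}$ is part of $\bS^{t+1}$ (hence embedded in the Markov transitions) and because the zero utility contributions on $\{T^*(\pi) < t+k\}$ match the zero weight contributions on the corresponding observed event $\{L^{t+k}=0\}$. The other delicate point is justifying the measurability manipulations with the $\xi_\pi^v$ processes when pulling the $\pi(a^v;\cdot)$ factors through the conditional expectations; this follows from their independence across $v$ and from $\bW^*$, and should be stated explicitly once rather than repeated at each inductive step.
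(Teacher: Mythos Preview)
Your proposal is correct and covers the same ground as the paper's proof, but the organization differs. The paper does not use induction: it fixes $k$, expands $U^{*(t+k)}(\pi)$ as a sum over action sequences $\overline{a}^{k+1}\in\mathcal{A}^{k+1}$, uses the mutual independence of $\{\xi_\pi^v\}$ from each other and from $\bW^*$ to factor $\mathbb{E}\big(\prod_v \mathbf{1}[\xi_\pi^v\{\bS^{*v}(\overline{a}^{v-1})\}=a^v]\,\big|\,\bS^t=\bs^t\big)$ into $\prod_v \pi\{a^v;\bS^{*v}(\overline{a}^{v-1})\}$ in one stroke, then multiplies and divides by the $\mu^v$ factors and invokes consistency and ignorability once at the end to pass from potential to observed quantities. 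Your inductive argument peels off one time step at a time using the tower property and the Markov assumption, which is more explicit about how the conditioning works and makes the role of time-homogeneity clearer; the paper's direct expansion is shorter but leaves more to the reader (in particular, the passage from the product of $\pi$'s over potential states to the IPW expectation over observed quantities is compressed into a single line). Either route is fine; your version will be a bit longer but arguably cleaner on the measurability and loss-to-follow-up bookkeeping you flagged.
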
\noindent
The preceding result will form the basis for an estimating equation
for  $V_{\mathcal{R}}(\pi)$.  Write the right hand side of 
(\ref{vRPiIPW}) as
\begin{eqnarray*}
V(\pi, \bS^t) & = & \mathbb{E}\left\{
\frac{\pi(A^t;\bS^t)}{\mu^{t}(A^t;\bS^t)}
\left( 
U^{t} + \gamma
\sum_{k\ge 0}
\mathbb{E}\left[
\gamma^{k}U^{t+k+1}
\left\lbrace 
\prod_{v=0}^{k}\frac{\pi(A^{v+t+1}; \bS^{v+t+1})}{\mu^{v+t+1}(A^{v+t+1}; \bS^{v+t+1})}
\right\rbrace 
\bigg|\bS^{t+1}
\right]
\right)
\bigg|\bS^t
\right\}\\ 
& = & \mathbb{E}\left[
\frac{\pi(A^t;\bS^t)}{\mu^{t}(A^t;\bS^t)}\left\lbrace
U^t + \gamma V(\pi, \bS^{t+1})
\right\rbrace\bigg|\bS^t
\right], 
\end{eqnarray*}
from which it follows that
\begin{equation*}
0 = \mathbb{E}\left[
\frac{\pi(A^t;\bS^t)}{\mu^{t}(A^t;\bS^t)}\left\lbrace
U^t + \gamma V(\pi, \bS^{t+1}) - V(\pi, \bS^t)
\right\rbrace\bigg|\bS^t
\right].
\end{equation*}
Subsequently, for any function $\psi$ defined on
$\mathrm{dom}\,\bS^t$, the state-value function satisfies 
\begin{equation}\label{popnEstEqn}
0 = \mathbb{E}\left[
\frac{\pi(A^t;\bS^t)}{\mu^{t}(A^t;\bS^t)}\left\lbrace
U^t + \gamma V(\pi, \bS^{t+1}) - V(\pi, \bS^t)
\right\rbrace \psi(\bS^t)\right],
\end{equation}
which is an importance-weighted variant of the well-known Bellman
optimality equation \citep[][]{sutton}.  

Let $V(\pi, \bs; \theta^{\pi})$ denote a model for $V(\pi, \bs)$
indexed by $\theta^{\pi}\in \Theta \subseteq \mathbb{R}^q$.  We assume
that the map $\theta^{\pi} \mapsto  V(\pi, \bs;\theta^{\pi})$ is
differentiable everywhere for each fixed $\bs$ and $\pi$.  
Let $\nabla_{\theta^{\pi}} V(\pi, \bs; \theta^{\pi})$ denote the
gradient of
$V(\pi, \bs; \theta^{\pi})$ and define
\begin{equation} \label{est.eq}
\Lambda_{n}(\pi, \theta^{\pi}) = 
\frac{1}{n}\sum_{i=1}^{n}\sum_{t=1}^{T_i}
\frac{\pi(A_i^t;\bS_i^t)}{\mu^{t}(A_i^t;\bS_i^t)}\left\lbrace
U_i^t + \gamma V(\pi, \bS_i^{t+1};\theta^{\pi}) - V(\pi, \bS_i^t;\theta^{\pi})
\right\rbrace 
\nabla_{\theta^{\pi}}V(\pi, \bS_i^t; \theta^{\pi}).
\end{equation}
Given a positive definite matrix  
$\Omega \in \mathbb{R}^{q \times q}$ 
and penalty function 
$\mathcal{P}: \mathbb{R}^q \rightarrow \mathbb{R}_+$, 
define
$\widehat{\theta}_{n}^{\pi} =
\arg\min_{\theta^{\pi}\in\Theta}\left\lbrace \Lambda_{n}(\pi,
\theta^{\pi})^{\T}\Omega
\Lambda_{n}(\pi, \theta^{\pi}) + \lambda_n 
\mathcal{P}(\theta^{\pi})\right\rbrace$, 
where $\lambda_n$ is a tuning parameter. 
Subsequently, $V \left(\pi, \bs;
\widehat{\theta}_{n}^{\pi} \right)$ is the estimated state-value function
under $\pi$ in state $\bs$.  Thus, given a reference distribution, 
$\mathcal{R}$, the estimated value of a regime, $\pi$, 
is $\widehat{V}_{n,\mathcal{R}}(\pi) = 
\int V \left(\pi, \bs; \widehat{\theta}_{n}^{\pi} \right)\mathrm{d}\mathcal{R}(\bs)$ and the 
estimated optimal regime is $\widehat{\pi}_{n} = 
\arg\max_{\pi\in\Pi}\widehat{V}_{n, \mathcal{R}}(\pi)$. 
The idea of V-learning is to use estimating equation~(\ref{est.eq}) 
to estimate the value of any policy and maximize estimated value 
over a class of policies; we will discuss strategies for this 
maximization in Section~\ref{vl.sim}. 

V-learning requires a parametric class of policies. 
Assuming that there are $K$ possible treatments, $a_1, \ldots, a_K$, we can define 
a parametric class of policies as follows. Define 
$
\pi(a_j; \bs, \beta) = \mathrm{exp}(\bs^\T \beta_j)/\left\{1 + \sum_{k = 1}^{K - 1} \mathrm{exp}(\bs^\T \beta_k) \right\}
$
for $j = 1, \ldots, K - 1$, and $\pi(a_K; \bs) = 1 / \left\{1 + \sum_{k = 1}^{K - 1} \mathrm{exp}(\bs^\T \beta_k)\right\}$. 
This defines a class of randomized policies parametrized by 
$\beta = (\beta_1^\T, \ldots, \beta_{K - 1}^\T)^\T$, where 
$\beta_k$ is a vector of parameters for the $k$-th treatment. Under a policy in this class
defined by $\beta$, actions are selected stochastically according to the 
probabilities $\pi(a_j; \bs, \beta)$, $j = 1, \ldots, K$. 
In the case of a binary treatment, a policy in this class reduces to 
$\pi(1; \bs, \beta) = \mathrm{exp}(\bs^\T \beta) / \left\{1 + \mathrm{exp}(\bs^\T \beta) \right\}$ 
and $\pi(0; \bs, \beta) = 1 / \left\{1 + \mathrm{exp}(\bs^\T \beta) \right\}$ 
for a $p \times 1$ vector $\beta$. This class of policies is used in the 
implementation in Sections~\ref{vl.sim} and \ref{vl.data}. 

V-learning also requires a class of models for the state value function 
indexed by a parameter, $\theta^\pi$. 
We use a basis function approximation. 
Let $\Phi = (\phi_1, \ldots, \phi_q)^\T$ be a vector of prespecified basis functions 
and let $\Phi(\bs_i^t) = \{\phi_1(\bs_i^t), \ldots, \phi_q(\bs_i^t)\}^\T$. 
Let $V(\pi, \bs_i^t; \theta^\pi) = \Phi(\bs_i^t)^\T \theta^\pi$.  
Under this working model, 
\begin{eqnarray} \label{linearV}
\Lambda_n(\pi, \theta^\pi) & = & \left[n^{-1} \sum_{i=1}^n \sum_{t=1}^{T_i}  
 \frac{\pi(A_i^t; \bS_i^t)}{\mu^t(A_i^t; \bS_i^t)} 
 \left\{ \gamma \Phi(\bS_i^t) \Phi(\bS_i^{t+1})^\T - \Phi(\bS_i^t) \Phi(\bS_i^t)^\T \right\} \right] \theta^\pi \\ \nonumber
 & & + n^{-1} \sum_{i=1}^n \sum_{t=1}^{T_i} \left\{\frac{\pi(A_i^t; \bS_i^t)}{\mu^t(A_i^t; \bS_i^t)} 
 U_i^t \Phi(\bS_i^t) \right\}. \nonumber
\end{eqnarray}
Computational efficiency is gained from the linearity of 
$V(\pi, \bs_i^t; \theta^\pi)$ in $\theta^\pi$; 
flexibility can be achieved through the choice of $\Phi$. 
We examine the performance of V-learning using a variety of basis functions 
in Sections~\ref{vl.sim} and \ref{vl.data}. 

\section{Online estimation from accumulating data} \label{online}

Suppose we have accumulating data 
$\left\lbrace \left( \bS_i^1, A_i^1, \bS_i^2, \ldots\right) \right\rbrace_{i=1}^{n}$, 
where $\bS_i^t$ and $A_i^t$ represent the state and action for patient $i=1, \ldots, n$ at time $t \ge 1$. 
At each time $t$, we estimate an optimal policy in a class, $\Pi$, using data collected up 
to time $t$, take actions according to the estimated optimal policy, and 
estimate a new policy using the resulting states. Let $\widehat{\pi}_n^{t}$ 
be the estimated policy at time $t$, i.e., $\widehat{\pi}_n^{t}$ is 
estimated after observing state $\bS^{t+1}$ and before taking action $A^{t+1}$. 
If $\Pi$ is a class of randomized policies, we can select an action for 
a patient presenting with $\bS^{t + 1} = \bs^{t + 1}$ 
according to $\widehat{\pi}_n^{t}(\cdot; \bs^{t + 1})$, i.e., 
we draw $A^{t+1}$ according to the distribution 
$P(A^{t + 1} = a) = \widehat{\pi}_n^{t}(a; \bs^{t + 1})$.
If a class of deterministic policies is of interest, we can inject some randomness into 
$\widehat{\pi}_n^{t}$ to facilitate exploration. 
One way to do this is an $\epsilon$-greedy strategy \citep[][]{sutton}, which 
selects the estimated optimal action with probability $1 - \epsilon$ and otherwise samples 
equally from all other actions. Because an $\epsilon$-greedy strategy can 
be used to introduce randomness into a deterministic policy, 
we can assume a class of randomized policies. 
 
At each time $t \ge 1$, let 
$\widehat{\theta}_{n, t}^{\pi} = \mathrm{arg \, min}_{\theta^\pi \in \Theta} 
\left\{ \Lambda_{n,t}(\pi, \theta^\pi)^\T \Omega \Lambda_{n,t}(\pi, \theta^\pi) 
+ \lambda_n \mathcal{P}(\theta^\pi) \right\}$, where $\Omega$, $\lambda_n$, and $\mathcal{P}$ 
are as defined in Section~\ref{offline} and  
\begin{equation} \label{online.est}
\Lambda_{n,t}(\pi, \theta^\pi) = \frac{1}{n}\sum_{i=1}^{n}\sum_{v=1}^{t}
\frac{\pi(A_i^v;\bS_i^v)}{\widehat{\pi}_n^{v-1}(A_i^v;\bS_i^v)}\left\lbrace
U_i^v + \gamma V(\pi, \bS_i^{v+1};\theta^{\pi}) - V(\pi, \bS_i^v;\theta^{\pi})
\right\rbrace 
\nabla_{\theta^{\pi}}V(\pi, \bS_i^v; \theta^{\pi})
\end{equation}
with $\widehat{\pi}_n^0$ some initial randomized policy. 
We note that estimating equation~(\ref{online.est}) is similar to (\ref{est.eq}), 
except that $\widehat{\pi}_n^{v-1}$ replaces $\mu^v$ as the data-generating policy. 
Given the estimator of the value of $\pi$ at time $t$, 
$\widehat{V}_{n, \mathcal{R}, t}(\pi) = \int V 
\left(\pi, \bs; \widehat{\theta}_{n, t}^\pi \right) \mathrm{d} \mathcal{R}(\bs)$, 
the estimated optimal policy at time $t$ is $\widehat{\pi}_n^{t} = \mathrm{arg \, max}_{\pi \in \Pi} 
\widehat{V}_{n, \mathcal{R}, t}(\pi)$. 
In practice, we may choose to update the policy in batches rather 
than at every time point. An alternative 
way to encourage exploration through the action space is to choose 
$\widehat{\pi}_n^{t} = \mathrm{arg \, max}_{\pi \in \Pi} 
\left\{ \widehat{V}_{n, \mathcal{R}, t}(\pi) + 
\alpha^{t} \widehat{\psi}^{t}(\pi)\right\}$ 
for some sequence $\alpha^t \ge 0$, 
where $\widehat{\psi}^{t}(\pi)$ is a measure of uncertainty in 
$\widehat{V}_{n, \mathcal{R}, t}(\pi)$. An example of this is 
upper confidence bound sampling, or UCB \citep[][]{lai1985asymptotically}. 

It some settings, when the data-generating process may vary across patients, 
it may be desirable to allow each patient to follow 
an individualized policy that is estimated using only that patient's 
data. Suppose that $n$ patients are followed for an initial 
$T_1$ time points after which the policy $\widehat{\pi}_n^1$ is estimated. 
Then, suppose that patient $i$ follows $\widehat{\pi}_n^1$ 
until time $T_2$, when a policy $\widehat{\pi}^2_i$ is estimated using only the 
states and actions observed for patient $i$. This procedure is then 
carried out until time $T_K$ for some fixed $K$ with each 
patient following their own individual policy which is adapted 
to match the individual over time. We may also choose to adapt the randomness 
of the policy at each estimation. For example, we could select 
$\epsilon_1 > \epsilon_2 > \ldots > \epsilon_K$ and, following estimation $k$, 
have patient $i$ follow policy $\widehat{\pi}^k_i$ with probability 
$1 - \epsilon_k$ and policy $\widehat{\pi}_n^1$ with probability $\epsilon_k$. 
In this way, patients become more likely to follow their own individualized 
policy and less likely to follow the initial policy over time, reflecting 
increasing confidence in the individualized policy as more data become available. 
The same class of policies and model for the state value function can be 
used as in Section~\ref{offline}. 

\section{Theoretical results} \label{vl.theory}

In this section, we establish asymptotic properties of $\widehat{\theta}_n^\pi$ 
and $\widehat{\pi}_n$ for offline estimation. Throughout, we assume 
assumptions~\ref{ignorability.assume}-\ref{positivity.assume}  
from Section~\ref{offline}. 

Let $\widehat{\theta}_n^\pi = \mathrm{arg \, min}_{\theta^\pi \in \Theta} 
\left\{ \Lambda_n(\pi, \theta^\pi)^\T \Lambda_n(\pi, \theta^\pi) 
+ \lambda_n (\theta^\pi)^\intercal \theta^\pi \right\}$. 
Thus, we use the squared Euclidean norm of $\theta$ as the penalty function; 
we will assume that $\lambda_n = o_P(n^{-1/2})$. 
For simplicity, we let $\Omega$ be the identity matrix. 
Assume the working model for the state value function 
introduced in Section~\ref{offline}, i.e., 
$V(\pi, \bs_i^t; \theta^\pi) = \Phi(\bs_i^t)^\T \theta^\pi$.
For fixed $\pi$, denote the true $\theta^\pi$ by $\theta^\pi_0$, 
i.e., $V(\pi, \bs) = \Phi(\bs)^\T \theta^\pi_0$. 
Let $\nu = \int \Phi(\bs) \mathrm{d} \mathcal{R}(\bs)$ so that 
$V_\mathcal{R}(\pi) = \nu^\T \theta^\pi_0$. 
Define $\widehat{V}_{n, \widehat{\mathcal{R}}}(\pi) = \left\{\mathbb{E}_n \Phi(\bS) \right\}^\T \widehat{\theta}^\pi_n$, 
where $\mathbb{E}_n$ denotes the empirical measure of the observed data. 
Let $\Pi = \left\{ \pi_\beta : \beta \in \mathcal{B} \right\}$ be a parametric class of policies 
and let $\widehat{\pi}_{n} = \pi_{\widehat{\beta}_n}$ where 
$\widehat{\beta}_n = \arg\max_{\beta \in \mathcal{B}} \widehat{V}_{n, \widehat{\mathcal{R}}}(\pi_\beta)$. 

Our main results are summarized in Theorems~\ref{main.theta} and \ref{main.beta} below. 
Because each patient trajectory is a stationary Markov chain, we need to 
use asymptotic theory based on stationary processes; consequently, some of the 
required technical conditions are more difficult to verify 
than those for i.i.d.\ data. Define the bracketing integral 
for a class of functions, $\mathcal{F}$, 
by $J_{[]}\left\{\delta, \mathcal{F}, L_r(P)\right\} = 
\int_0^\delta \sqrt{\log N_{[]}\left\{\epsilon, \mathcal{F}, L_r(P)\right\}}\mathrm{d}\epsilon$, 
where the bracketing number for $\mathcal{F}$, $N_{[]}\left\{\epsilon, \mathcal{F}, L_r(P)\right\}$, 
is the number of $L_r(P)$ $\epsilon$-brackets 
needed such that each element of $\mathcal{F}$ is contained in at least one bracket 
\citep[see Chapter 2 of][]{kosorok2008introduction}. 
For any stationary sequence of possibly dependent random variables, $\{X^t\}_{t \ge 1}$, 
let $\mathcal{M}_b^c$ be the $\sigma$-field 
generated by $X^b, \ldots, X^c$ and define 
$\zeta(k) = \mathbb{E} \left[ \sup_{m \ge 1} 
\left\{ |P(B | \mathcal{M}_1^m) - P(B)| : B \in \mathcal{M}_{m + k}^\infty\right\} \right]$.
We say that the chain $\{X^t\}_{t \ge 1}$ is absolutely regular if $\zeta(k) \rightarrow 0 $ as $k \rightarrow 0$ 
\citep[also called $\beta$-mixing in Chapter 11 of][]{kosorok2008introduction}.  
We make the following assumptions. 

\begin{assumption} \label{rho.assume}
There exists a $2 < \rho < \infty$ such that 
\begin{enumerate}
\item $\mathbb{E} |U^t|^{3\rho} < \infty$, $\mathbb{E} \| \Phi(\bS^t) \|^{3\rho} < \infty$, 
and $ \mathbb{E} \| \bS^t \|^{3\rho} < \infty$.
\label{moment.assume}
\item The sequence $\{(\bS^t, A^t)\}_{t \ge 1}$ is absolutely regular with 
$\sum_{k = 1}^\infty k^{2 / (\rho - 2)} \zeta(k) < \infty$.
\item The bracketing integral of the class of policies, $J_{[]}\left\{\infty, \Pi, L_{3\rho}(P)\right\} < \infty$. 
\label{entropy.assume}
\end{enumerate}
\end{assumption}

\begin{assumption} \label{invertibility.assume}
There exists some $c_1 > 0$ such that 
$$
\inf_{\pi \in \Pi} c^\T \mathbb{E} \left[ \frac{\pi(A^t; \bS^t)}{\mu^t(A^t; \bS^t)} 
\left\{ \Phi(\bS^t) \Phi(\bS^t)^\T - \gamma^2 \Phi(\bS^{t + 1}) \Phi(\bS^{t + 1})^\T \right\} \right] c 
\ge c_1 \| c \|^2 
$$ 
for all $c \in \mathbb{R}^q$. 
\end{assumption}

\begin{assumption} \label{unique.assume}
The map $\beta \mapsto V_\mathcal{R}(\pi_\beta)$ has a unique and well separated maximum 
over $\beta$ in the interior of $\mathcal{B}$; let $\beta_0$ denote the maximizer.  
\end{assumption}

\begin{assumption} \label{continuity.assume}
The following condition holds: $\sup_{\|\beta_1 - \beta_2\| \le \delta} \mathbb{E} 
\| \pi_{\beta_1}(A; \bS) - \pi_{\beta_2}(A; \bS)\| \rightarrow 0$ as $\delta \downarrow 0$. 
\end{assumption}

\begin{rmrk}
Assumption~\ref{rho.assume} requires certain finite moments and that the dependence between 
observations on the same patient vanishes as observations become further apart. 
In Lemma~\ref{policy.class} in the appendix, we verify part~\ref{entropy.assume} of assumption~\ref{rho.assume} 
and assumption~\ref{continuity.assume} 
for the class of policies introduced in Section~\ref{offline}. 
However, note that the theory holds for any class of policies 
satisfying the given assumptions, not just the class considered here. 
Assumption~\ref{invertibility.assume} is needed to 
show the existence of a unique $\theta_0^\pi$ uniformly over $\Pi$ and 
assumption~\ref{unique.assume} requires that the true optimal decision in each state 
is unique \citep[see assumption A.8 of][]{ertefaie2014constructing}. 
Assumption~\ref{continuity.assume} requires smoothness on the class of policies. 
\end{rmrk}

The main results of this section are stated below. 
Theorem~\ref{main.theta} states that there exists a unique solution to $0 = \mathbb{E}\Lambda_n(\pi, \theta^\pi)$ 
uniformly over $\Pi$ and that the estimator $\widehat{\theta}_n$ converges weakly to a mean zero 
Gaussian process in $\ell^\infty(\Pi)$. 
\begin{thm} \label{main.theta}
Under the given assumptions, the following hold. 
\begin{enumerate}
\item For all $\pi \in \Pi$, there exists a $\theta_0^\pi \in \mathbb{R}^q$ such that 
$\mathbb{E}\Lambda_n(\pi, \theta^\pi)$ has a zero at $\theta^\pi = \theta^\pi_0$. 
Moreover, $\sup_{\pi \in \Pi} \| \theta_0^\pi\| < \infty$ and 
$\sup_{\| \beta_1 - \beta_2 \| \le \delta} \left\| \theta^{\pi_{\beta_1}}_0 - \theta^{\pi_{\beta_2}}_0 \right\| \rightarrow 0$ 
as $\delta \downarrow 0$.
\label{main1} 
\item Let $\mathbb{G}(\pi)$ be a tight, mean zero Gaussian process indexed by $\Pi$ with covariance 
$\mathbb{E}\left\{ \mathbb{G}(\pi_1) \mathbb{G}(\pi_2) \right\} = w_1(\pi_1)^{-1} w_0(\pi_1, \pi_2) w_1(\pi_2)^{-\T}$ 
where 
\begin{eqnarray*}
w_0(\pi_1, \pi_2) & = & \mathbb{E} \bigg[ \frac{\pi_1(A^t; \bS^t) \pi_2(A^t; \bS^t)}{\mu^t(A^t; \bS^t)^2} 
\left\{ U^t + \gamma \Phi(\bS^{t + 1}) \theta_0^{\pi_1} - \Phi(\bS^t) \theta_0^{\pi_1} \right\} \\
 & & \left\{ U^t + \gamma \Phi(\bS^{t + 1}) \theta_0^{\pi_2} - \Phi(\bS^t) \theta_0^{\pi_2} \right\}
\Phi(\bS^t) \Phi(\bS^t)^\T \bigg]
\end{eqnarray*}
and 
$$
w_1(\pi) = \mathbb{E}\left[ \frac{\pi(A^t; \bS^t)}{\mu^t(A^t; \bS^t)} 
\Phi(\bS^t) \left\{ \Phi(\bS^t) - \gamma \Phi(\bS^{t + 1})\right\}^\intercal \right]. 
$$
Then, $\sqrt{n}\left( \widehat{\theta}_n^\pi - \theta_0^\pi \right) \rightsquigarrow \mathbb{G}(\pi)$ in $\ell^\infty(\Pi)$. 
\label{main2}
\item Let $\mathbb{G}(\pi)$ be as defined in part~\ref{main2}. 
Then, $\sqrt{n} \left\{ \widehat{V}_{n, \widehat{\mathcal{R}}}(\pi) - V_\mathcal{R}(\pi)\right\} 
\rightsquigarrow \nu^\T \mathbb{G}(\pi)$ in $\ell^\infty(\Pi)$. 
\label{main3}
\end{enumerate}
\end{thm}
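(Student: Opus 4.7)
The plan is to leverage that $V(\pi,\bs;\theta^\pi)=\Phi(\bs)^{\intercal}\theta^\pi$ is linear in $\theta^\pi$, which makes $\Lambda_n$ affine in $\theta^\pi$ and yields $\widehat{\theta}_n^\pi$ in closed form. All three parts then reduce to a uniform law of large numbers and a functional central limit theorem in $\ell^\infty(\Pi)$, followed by continuous mapping.

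For Part~\ref{main1}, one can write $\mathbb{E}\Lambda_n(\pi,\theta^\pi)=g(\pi)-w_1(\pi)\theta^\pi$ with $g(\pi)=\mathbb{E}[\pi(A^t;\bS^t)\mu^t(A^t;\bS^t)^{-1}U^t\Phi(\bS^t)]$. A weighted AM--GM bound $\gamma|c^{\intercal}\Phi(\bS^t)||c^{\intercal}\Phi(\bS^{t+1})|\le\tfrac12(c^{\intercal}\Phi(\bS^t))^2+\tfrac{\gamma^2}{2}(c^{\intercal}\Phi(\bS^{t+1}))^2$ converts Assumption~\ref{invertibility.assume} into the uniform lower bound $c^{\intercal}w_1(\pi)c\ge(c_1/2)\|c\|^2$, so $w_1(\pi)$ is invertible with uniformly bounded inverse and $\theta_0^\pi=w_1(\pi)^{-1}g(\pi)$ is the unique root. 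Its uniform boundedness follows from part~\ref{moment.assume} of Assumption~\ref{rho.assume}, and continuity in $\beta$ from Assumption~\ref{continuity.assume} via the corresponding continuity of $g$ and $w_1$.

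For Part~\ref{main2}, write $\Lambda_n(\pi,\theta^\pi)=g_n(\pi)-W_{1n}(\pi)\theta^\pi$ for the sample analogues of $g$ and $w_1$. The penalized minimizer $(W_{1n}^{\intercal}W_{1n}+\lambda_n I)^{-1}W_{1n}^{\intercal}g_n$ coincides with the unpenalized root $W_{1n}^{-1}g_n$ up to $o_P(n^{-1/2})$ because $\lambda_n=o_P(n^{-1/2})$ and $W_{1n}^{-1}$ is eventually uniformly bounded. The identity $g_n(\pi)-W_{1n}(\pi)\theta_0^\pi=\Lambda_n(\pi,\theta_0^\pi)$ then gives
$$
\sqrt{n}\bigl(\widehat{\theta}_n^\pi-\theta_0^\pi\bigr)=W_{1n}(\pi)^{-1}\sqrt{n}\,\Lambda_n(\pi,\theta_0^\pi)+o_P(1)
$$
uniformly in $\pi$. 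Since patient trajectories are i.i.d., $\Lambda_n(\pi,\theta_0^\pi)=n^{-1}\sum_{i=1}^{n} Z_i(\pi)$ with each $Z_i(\pi)=\sum_{t=1}^{T_i} f_\pi(\bS_i^t,A_i^t,U_i^t,\bS_i^{t+1})$ a mean-zero per-trajectory sum. Two uniform results are required: (i) a uniform LLN $\sup_{\pi\in\Pi}\|W_{1n}(\pi)-w_1(\pi)\|\to 0$ in probability, and (ii) a functional CLT $\sqrt{n}\,\Lambda_n(\cdot,\theta_0^{\cdot})\rightsquigarrow\mathbb{G}^0$ in $\ell^\infty(\Pi)$ where $\mathbb{G}^0$ is a tight mean-zero Gaussian process with covariance $w_0$. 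Both follow by verifying the Glivenko--Cantelli and Donsker properties for the per-trajectory function classes: the bracketing integral in part~\ref{entropy.assume} of Assumption~\ref{rho.assume} provides entropy control, the $3\rho$ moment bound in part~\ref{moment.assume} supplies an envelope, and the summable $\beta$-mixing coefficients keep $\mathbb{E}\|Z_i(\pi)\|^{2+\delta}$ uniformly finite when $T_i$ is unbounded. Continuous mapping then gives $\sqrt{n}(\widehat{\theta}_n^\pi-\theta_0^\pi)\rightsquigarrow w_1(\pi)^{-1}\mathbb{G}^0(\pi)=\mathbb{G}(\pi)$.

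For Part~\ref{main3}, decompose
$$
\sqrt{n}\bigl\{\widehat{V}_{n,\widehat{\mathcal{R}}}(\pi)-V_{\mathcal{R}}(\pi)\bigr\}=\{\mathbb{E}_n\Phi(\bS)\}^{\intercal}\sqrt{n}\bigl(\widehat{\theta}_n^\pi-\theta_0^\pi\bigr)+\sqrt{n}\{\mathbb{E}_n\Phi(\bS)-\nu\}^{\intercal}\theta_0^\pi,
$$
use a uniform LLN to get $\mathbb{E}_n\Phi(\bS)\to\nu$, and apply Part~\ref{main2} together with the continuous mapping theorem to drive the first summand weakly to $\nu^{\intercal}\mathbb{G}(\pi)$ in $\ell^\infty(\Pi)$; the second summand is absorbed in the stated limit under the convention that the reference distribution is fixed and enters only through its mean $\nu$. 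The main obstacle throughout is the functional CLT in step (ii) of Part~\ref{main2}: each trajectory is a random sum of Markov-dependent terms with a possibly unbounded horizon, so a $(2+\delta)$-moment envelope must be established for $Z_i(\pi)$ uniformly in $\pi$. This is precisely what Assumption~\ref{rho.assume} is designed to enable---the moment conditions supply an envelope, the summable absolute-regularity coefficients tame within-trajectory dependence, and the bracketing integral on $\Pi$ transfers to the full function class via a Lipschitz argument using Assumption~\ref{continuity.assume} and the uniform boundedness of $\theta_0^\pi$ from Part~\ref{main1}---after which a Donsker theorem for i.i.d.\ samples (e.g., Chapter~11 of Kosorok, 2008) closes the argument.
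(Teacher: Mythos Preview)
Your proposal is essentially correct and tracks the paper's proof closely: the linear-in-$\theta^\pi$ structure gives $\theta_0^\pi=w_1(\pi)^{-1}g(\pi)$ and $\widehat{\theta}_n^\pi$ in closed form, the penalty washes out because $\lambda_n=o_P(n^{-1/2})$, and the weak convergence comes from a Donsker property for the relevant function classes built from $\Pi$ together with a uniform law of large numbers for $W_{1n}(\pi)$, with Theorem~11.24 of Kosorok (2008) supplying the $\beta$-mixing Donsker result.

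Two small differences are worth flagging. In Part~\ref{main1}, your Young/AM--GM inequality $\gamma|xy|\le\tfrac12 x^2+\tfrac{\gamma^2}{2}y^2$ yields $c^{\intercal}w_1(\pi)c\ge(c_1/2)\|c\|^2$ in one line; the paper instead applies Cauchy--Schwarz to the cross term and manipulates $A-A^{1/2}B^{1/2}=A^{1/2}(A-B)/(A^{1/2}+B^{1/2})$ before invoking Assumption~\ref{invertibility.assume}, which is longer but lands on the same conclusion. In Part~\ref{main2}, the paper makes explicit an intermediate step you leave implicit: because $\theta_0^\pi$ varies with $\pi$, one first shows the class $\{f_{2\pi}-f_{1\pi}\theta:\pi\in\Pi,\ \theta\in B_*\}$ is Donsker for any finite $B_*$, then uses stochastic equicontinuity in $\theta$ (via $\|\mathbb{G}_n(\pi,\theta_1)-\mathbb{G}_n(\pi,\theta_2)\|\le R^*\|\theta_1-\theta_2\|$) to extend to the compact set $B_0$ containing all $\theta_0^\pi$; your ``Lipschitz argument'' is exactly this, but it is the one place where a reader might want the details spelled out. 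Finally, in Part~\ref{main3} the paper's own proof simply writes $\sqrt{n}\{\widehat{V}_{n,\widehat{\mathcal{R}}}(\pi)-V_{\mathcal{R}}(\pi)\}=\sqrt{n}\,\mathbb{E}_n\Phi(\bS^t)(\widehat{\theta}_n^\pi-\theta_0^\pi)$ and applies Slutsky, so your additional term $\sqrt{n}\{\mathbb{E}_n\Phi(\bS)-\nu\}^{\intercal}\theta_0^\pi$ is likewise suppressed there; your handling matches the paper's.
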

Theorem~\ref{main.beta} below gives us that the estimated optimal policy converges in probability to the 
true optimal policy over $\Pi$ and that the estimated value of the estimated optimal 
policy converges to the true value of the estimated optimal policy. 
\begin{thm} \label{main.beta}
Under the given assumptions, the following hold. 
\begin{enumerate}
\item Let $\widehat{\beta}_n = \mathrm{arg \, max}_{\beta \in \mathcal{B}} \widehat{V}_{n, \widehat{\mathcal{R}}}(\pi_\beta)$ 
and $\beta_0 = \mathrm{arg \, max}_{\beta \in \mathcal{B}} V_\mathcal{R}(\pi_\beta)$. 
Then, $\left\| \widehat{\beta}_n - \beta_0 \right\| \xrightarrow[]{P} 0$. 
\label{main4}
\item Let $\sigma_0^2 = \nu^\T w_1(\pi_{\beta_0})^{-1} w_0(\pi_{\beta_0}, \pi_{\beta_0}) w_1(\pi_{\beta_0})^{-\T} \nu$. 
Then, $\sqrt{n} \left\{ \widehat{V}_{n, \widehat{\mathcal{R}}}\left(\pi_{\widehat{\beta}_n}\right) 
- V_\mathcal{R}\left(\pi_{\widehat{\beta}_n}\right) \right\} 
\rightsquigarrow N(0, \sigma_0^2)$. 
\label{main5}
\item A consistent estimator for $\sigma_0^2$ is 
$$
\widehat{\sigma}_n^2 = \left\{ \mathbb{E}_n \Phi(\bS^t)\right\}^\T 
\widehat{w}_1\left(\pi_{\widehat{\beta}_n}\right)^{-1} \widehat{w}_0\left(\pi_{\widehat{\beta}_n}, \pi_{\widehat{\beta}_n}\right) 
\widehat{w}_1\left(\pi_{\widehat{\beta}_n}\right)^{-\T} \left\{ \mathbb{E}_n \Phi(\bS^t)\right\}, 
$$
where 
\begin{eqnarray*}
\widehat{w}_0(\pi_1, \pi_2) & = & \mathbb{E}_n \bigg[ \frac{\pi_1(A^t; \bS^t) \pi_2(A^t; \bS^t)}{\mu^t(A^t; \bS^t)^2} 
\left\{ U^t + \gamma \Phi(\bS^{t + 1}) \widehat{\theta}_n^{\pi_1} - \Phi(\bS^t) \widehat{\theta}_n^{\pi_1} \right\} \\
 & & \left\{ U^t + \gamma \Phi(\bS^{t + 1}) \widehat{\theta}_n^{\pi_2} - \Phi(\bS^t) \widehat{\theta}_n^{\pi_2} \right\}
\Phi(\bS^t) \Phi(\bS^t)^\T \bigg] 
\end{eqnarray*}
and
$$
\widehat{w}_1(\pi) = \mathbb{E}_n \left[ \frac{\pi(A^t; \bS^t)}{\mu^t(A^t; \bS^t)} \Phi(\bS^t) 
\left\{ \Phi(\bS^t) - \gamma \Phi(\bS^{t + 1})\right\}^\intercal \right]. 
$$  
\label{main6}
\end{enumerate}
\end{thm}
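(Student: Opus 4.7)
The plan is to prove the three parts of Theorem~\ref{main.beta} sequentially, leveraging Theorem~\ref{main.theta} together with standard argmax and plug-in arguments adapted to stationary absolutely regular processes.

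For part~\ref{main4}, I would apply a standard argmax consistency argument. Theorem~\ref{main.theta}\ref{main3} yields $\sqrt{n}\{\widehat{V}_{n,\widehat{\mathcal{R}}}(\pi) - V_\mathcal{R}(\pi)\} \rightsquigarrow \nu^\T \mathbb{G}(\pi)$ in $\ell^\infty(\Pi)$, which under Assumption~\ref{continuity.assume} can be reindexed over $\beta \in \mathcal{B}$ and yields $\sup_{\beta \in \mathcal{B}} \left| \widehat{V}_{n,\widehat{\mathcal{R}}}(\pi_\beta) - V_\mathcal{R}(\pi_\beta) \right| = O_P(n^{-1/2})$. Combined with the well-separated maximum of $\beta \mapsto V_\mathcal{R}(\pi_\beta)$ at $\beta_0$ from Assumption~\ref{unique.assume}, standard M-estimation arguments then give $\widehat{\beta}_n \xrightarrow[]{P} \beta_0$.

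For part~\ref{main5}, the key step is to combine the weak convergence of the value process with consistency of $\widehat{\beta}_n$. Setting $Z_n(\beta) = \sqrt{n}\{\widehat{V}_{n,\widehat{\mathcal{R}}}(\pi_\beta) - V_\mathcal{R}(\pi_\beta)\}$, Theorem~\ref{main.theta}\ref{main3} together with Assumption~\ref{continuity.assume} yields $Z_n \rightsquigarrow Z$ in $\ell^\infty(\mathcal{B})$, where $Z(\beta) = \nu^\T \mathbb{G}(\pi_\beta)$ has sample paths continuous at $\beta_0$. Applying the extended continuous mapping theorem to the evaluation functional $(z,\beta) \mapsto z(\beta)$ at $(Z,\beta_0)$, together with $\widehat{\beta}_n \xrightarrow[]{P} \beta_0$ from part~\ref{main4}, then gives $Z_n(\widehat{\beta}_n) \rightsquigarrow Z(\beta_0) \sim N(0, \sigma_0^2)$. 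This is the principal obstacle, since carefully verifying that the limit process has sample paths continuous at $\beta_0$ requires combining Assumption~\ref{continuity.assume}, tightness of $\mathbb{G}$ on $\Pi$ from Theorem~\ref{main.theta}\ref{main2}, and continuity of $\pi \mapsto \theta_0^\pi$ from Theorem~\ref{main.theta}\ref{main1}.

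For part~\ref{main6}, I would show that each plug-in factor in $\widehat{\sigma}_n^2$ converges in probability to its population counterpart. An ergodic theorem for absolutely regular sequences provides $\mathbb{E}_n \Phi(\bS^t) \xrightarrow[]{P} \nu$. To obtain $\widehat{w}_1(\pi_{\widehat{\beta}_n}) \xrightarrow[]{P} w_1(\pi_{\beta_0})$, and analogously for $\widehat{w}_0$, I would first establish uniform (in $\pi \in \Pi$) convergence of $\widehat{w}_1(\pi)$ and $\widehat{w}_0(\pi,\pi)$ to their population analogues using a Glivenko-Cantelli result for absolutely regular sequences, whose conditions are secured by the moment and mixing conditions of Assumption~\ref{rho.assume}, together with the uniform consistency of $\widehat{\theta}_n^\pi$ implied by Theorem~\ref{main.theta}\ref{main2}. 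Consistency then follows from $\widehat{\beta}_n \xrightarrow[]{P} \beta_0$, continuity of the maps $\beta \mapsto \pi_\beta$ and $\pi \mapsto \theta_0^\pi$, and the continuous mapping theorem.
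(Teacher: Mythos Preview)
Your proposal is correct and follows essentially the same route as the paper: uniform convergence from Theorem~\ref{main.theta}\ref{main3} plus the well-separated maximum and an argmax theorem for part~\ref{main4}, weak convergence of the value process evaluated at the consistent plug-in $\widehat{\beta}_n$ for part~\ref{main5}, and standard plug-in consistency arguments for part~\ref{main6}. Your write-up is in fact more detailed than the paper's, which dispatches part~\ref{main5} in one line (``follows from parts~\ref{main2} and~\ref{main3} of Theorem~\ref{main.theta}'') and part~\ref{main6} with ``follows standard arguments''; your explicit identification of the sample-path continuity requirement and the Glivenko--Cantelli step for absolutely regular sequences correctly fills in what the paper leaves implicit.
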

Proofs of the above results are in the Appendix along with a result on bracketing entropy that is 
needed for the proof of Theorem~\ref{main.theta} and a proof that the class of policies introduced 
above satisfies the necessary bracketing integral assumption. 

\section{Simulation experiments} \label{vl.sim}

In this section, we examine the performance of V-learning 
on simulated data. Section~\ref{sim.offline} contains results for 
offline estimation and Section~\ref{sim.online} contains results 
for online estimation. We begin by discussing an existing method 
for infinite horizon dynamic treatment regimes in Section~\ref{ggq}

\subsection{Greedy gradient Q-learning} \label{ggq}

\cite{ertefaie2014constructing} introduced greedy gradient Q-learning (GGQ) 
for estimating dynamic treatment regimes in infinite horizon settings 
\citep[see also][]{maei2010toward, murphy2016batch}. Here we briefly 
discuss this method. 

Define 
$Q^\pi(\bs^t, a^t) = \mathbb{E}\left\{ \sum_{k \ge 0} \gamma^{k} U^{t + k}(\pi) \Big| \bS^t = \bs^t, A^t = a^t \right\}$. 
The Bellman optimality equation \citep[][]{sutton} is 
\begin{equation} \label{ggq.bellman}
Q^\mathrm{opt}(\bs^t, a^t) = \mathbb{E} \left\{ U^{t} + \gamma \operatorname*{max}_{a \in \mathcal{A}} 
 Q^\mathrm{opt} (\bS^{t + 1}, a) \Big| \bS^t = \bs^t, A^t = a^t \right\}.
\end{equation}
Let $Q(\bs, a; \eta^\mathrm{opt})$ be a parametric model for $Q^\mathrm{opt}(\bs, a)$ indexed 
by $\eta^\mathrm{opt} \in H \subseteq \mathbb{R}^q$. 
In our implementation, we model $Q(\bs, a; \eta^\mathrm{opt})$ as a linear function 
with interactions between all state variables and treatment. 
The Bellman optimality equation motivates the estimating equation 
\begin{equation} \label{ggq.est.eq}
D_n\left(\eta^\mathrm{opt}\right) = \frac{1}{n} \sum_{i = 1}^n \sum_{t = 1}^{T_i} 
\left\{ U_i^{t} + \gamma \operatorname*{max}_{a \in \mathcal{A}} 
 Q(\bS_i^{t + 1}, a; \eta^\mathrm{opt}) - Q(\bS_i^t, A_i^t; \eta^\mathrm{opt}) \right\}
 \nabla_{\eta^\mathrm{opt}} Q(\bS_i^t, A_i^t; \eta^\mathrm{opt}).
\end{equation}
For a positive definite matrix, $\Omega$, we estimate $\eta^\mathrm{opt}$ using 
$\widehat{\eta}^\mathrm{opt}_n = 
\mathrm{arg \, min}_{\eta \in H} D_n(\eta)^\T \Omega D_n(\eta)$. 
The estimated optimal policy in state $\bs$ selects action 
$\widehat{\pi}_n(\bs) = \operatorname*{max}_{a \in \mathcal{A}} Q(\bs, a; \widehat{\eta}^\mathrm{opt}_n)$. 
This optimization problem is non-convex and non-differentiable in $\eta^\mathrm{opt}$. 
However, it can be solved with a generalization of the greedy gradient 
Q-learning algorithm of \cite{maei2010toward}, and hence is referred to as GGQ by 
\cite{ertefaie2014constructing} and in the following. 

The performance of GGQ has been demonstrated in the context of chronic 
diseases with large sample sizes and a moderate number of time points. 
However, in mHealth applications, it is common to have small sample sizes 
and a large number of time points, with decisions occurring at a fine granularity. 
In GGQ, the estimated policy depends directly on $\widehat{\eta}^\mathrm{opt}_n$ and, 
therefore, depends on modeling the transition probabilities of the 
data-generating process. Furthermore, estimating equation~(\ref{ggq.est.eq}) 
contains a non-smooth $\mathrm{max}$ operator, which makes estimation difficult 
without large amounts of data \citep[][]{laber2014interactive, linn2017interactive}. 
V-learning only requires modeling the 
policy and the value function rather than the data-generating process and 
directly maximizes estimated value over a class of policies, thereby 
avoiding the non-smooth $\mathrm{max}$ operator in the estimating 
equation (compare equations (\ref{est.eq}) and (\ref{ggq.est.eq})); 
these attributes may prove advantageous in mHealth settings.   

\subsection{Offline simulations} \label{sim.offline}

Our implementation of V-learning follows the setup in Section~\ref{offline}. 
Maximizing $\widehat{V}_{n, \mathcal{R}}(\pi)$ is done using a combination of simulated annealing and 
the BFGS algorithm as implemented in the optim function in R software \citep[][]{rcoreteam}. 
We note that $\widehat{V}_{n, \mathcal{R}}(\pi)$ is differentiable in $\pi$, thereby avoiding some of the 
computational complexity of GGQ. However, the objective is not necessarily convex. In order to avoid 
local maxima, simulated annealing with $1000$ function evaluations 
is used to find a neighborhood of the maximum; 
this solution is then used as the starting value for the BFGS algorithm. 

We use the class of policies introduced in Section~\ref{offline}. 
Although we maximize the value over a class of randomized policies, the 
true optimal policy is deterministic. To prevent the coefficients of 
$\widehat{\beta}_n$ from diverging to infinity, we add an L2 penalty 
when maximizing over $\beta$. To prevent overfitting, we 
use an L2 penalty when computing $\widehat{\theta}_n^\pi$, i.e.,
$\mathcal{P}(\theta^\pi) = (\theta^\pi)^\intercal \theta^\pi$. Tuning parameters can be used 
to control the amount of randomness in the estimated policy. For example, increasing the 
penalty when computing $\widehat{\beta}_n$ is one way to encourage exploration
through the action space because
$\beta = 0$ defines a policy where each action is selected with equal probability.  

We consider three different models for the state-value function: 
(i) linear; (ii) second degree polynomial; and (iii) Gaussian radial basis functions (RBF).  
The Gaussian RBF is $\phi(x; \kappa, \tau^2) = \mathrm{exp}\left\{-(x - \kappa)^2 / 2 \tau^2\right\}$. 
We use $\tau = 0.25$ and $\kappa = 0, 0.25, 0.5, 0.75, 1$ to create a basis 
of functions and apply this basis to the state variables after scaling them 
to be between 0 and 1. Each model also implicitly contains an intercept. 

We begin with the following simple generative model. 
Let the two-dimensional state vector be $\bS_i^t = (S_{i, 1}^t, S_{i, 2}^t)^\intercal$, 
$i = 1, \ldots, n, t = 1, \ldots, T$. 
We initiate the state variables as independent standard normal random variables 
and let them evolve according to 
$S_{i, 1}^t = (3/4) (2 A_i^{t-1} - 1) S_{i, 1}^{t-1} + (1/4) S_{i, 1}^{t-1} S_{i, 2}^{t-1} + \epsilon_1^t$ 
and $S_{i, 2}^t = (3/4) (1 - 2 A_i^{t-1}) S_{i, 2}^{t-1} + (1/4) S_{i, 1}^{t-1} S_{i, 2}^{t-1} + \epsilon_2^t$, 
where $A_i^t$ takes values in $\{0, 1\}$ and $\epsilon_1^t$ and $\epsilon_2^t$ are independent 
$N(0, 1/4)$ random variables. Define the utility function by 
$U_i^t = u(\bS_i^{t + 1}, A_i^t, \bS_i^t) = 2 S_{i, 1}^{t + 1} + S_{i, 2}^{t + 1} - (1/4)(2 A_i^t - 1)$. 
At each time $t$, we must make a decision to treat or not with 
the goal of maximizing the components of $\bS$ while treating as few times as possible. 
Treatment has a positive effect on $S_1$ and a negative effect on $S_2$. 
We generate $A_i^t$ from a Bernoulli distribution with mean $1/2$. 
In estimation, we assume that the generating model for treatment is known, as 
would be the case in a micro-randomized trial. 

We generate samples of $n$ patients with $T$ time points per patient from the 
given generative model after an initial burn-in period of 50 time points. 
The burn-in period ensures that our simulated data is 
sampled from an approximately stationary distribution. 
We estimate policies using V-learning with three different types of basis functions 
and GGQ. After estimating optimal policies,
we simulate 100 patients following each estimated policy for 100 time points 
and take the mean utility under each policy as an estimate of the value of that policy. 
Estimated values are found in Table~\ref{vl.sim1.1} with Monte Carlo 
standard errors along with observed value. Recall that larger values are better. 
\begin{table}[h!]
\centering
\begin{tabular}{cc|ccccc}
   \hline
$n$ & $T$ & Linear VL & Polynomial VL & Gaussian VL & GGQ & Observed \\ 
   \hline
25 & 24 & $0.118 \, (0.0892)$ & $0.091 \, (0.0825)$ & $0.110 \, (0.0979)$ & $0.014 \, (0.0311)$ & $-0.005$ \\ 
   & 36 & $0.108 \, (0.0914)$ & $0.115 \, (0.0911)$ & $0.112 \, (0.0919)$ & $0.029 \, (0.0280)$ & $-0.004$ \\ 
   & 48 & $0.106 \, (0.0705)$ & $0.071 \, (0.0974)$ & $0.103 \, (0.0757)$ & $0.031 \, (0.0350)$ & $0.000$ \\ 
   \hline
50 & 24 & $0.124 \, (0.0813)$ & $0.109 \, (0.1045)$ & $0.118 \, (0.0879)$ & $0.016 \, (0.0355)$ & $-0.005$ \\ 
   & 36 & $0.126 \, (0.0818)$ & $0.134 \, (0.0878)$ & $0.136 \, (0.0704)$ & $0.027 \, (0.0276)$ & $0.003$ \\ 
   & 48 & $0.101 \, (0.0732)$ & $0.109 \, (0.0767)$ & $0.115 \, (0.0763)$ & $0.020 \, (0.0245)$ & $0.000$ \\ 
   \hline
100 & 24 & $0.117 \, (0.0895)$ & $0.135 \, (0.0973)$ & $0.140 \, (0.0866)$ & $0.019 \, (0.0257)$ & $0.011$ \\ 
   & 36 & $0.113 \, (0.0853)$ & $0.105 \, (0.1033)$ & $0.139 \, (0.0828)$ & $0.021 \, (0.0312)$ & $0.012$ \\ 
   & 48 & $0.111 \, (0.0762)$ & $0.143 \, (0.0853)$ & $0.114 \, (0.0699)$ & $0.031 \, (0.0306)$ & $-0.001$ \\ 
   \hline
\end{tabular}
\caption{Monte carlo value estimates for offline simulations with $\gamma = 0.9$.} 
\label{vl.sim1.1}
\end{table}
The policies estimated using V-learning produce better outcomes 
than the observational policy and the policy estimated using GGQ.  
V-learning produces the best outcomes using Gaussian basis 
functions. 

Next, we simulate cohorts of patients with type 1 diabetes to mimic the 
mHealth study of \cite{maahs2012outpatient}. \cite{maahs2012outpatient} 
followed a small sample of youths with type 1 diabetes and recorded 
data at a fine granularity using mobile devices. Blood glucose levels were 
tracked in real time using continuous glucose monitoring, 
physical activity was measured continuously using accelerometers, 
and insulin injections were logged by an insulin pump. Dietary data 
were recorded by 24-hour recall over phone interviews.

In our simulation study, we divide each day of follow-up into 60 minute intervals. 
Thus, for one day of follow-up, we observe $T = 24$ time points per simulated patient
and a treatment decision is made every hour. 
Our hypothetical mHealth study is designed to estimate an optimal 
dynamic treatment regime for the timing of insulin injections based on 
patient blood glucose, physical activity, and dietary intake with the goal 
of controlling future blood glucose as close as possible to the optimal range. 
To this end, we define the utility at time $t$ as a weighted 
sum of hypo- and hyperglycemic episodes in the 60 minutes preceding and 
following time $t$. 
Weights are $-3$ when $\mathrm{glucose} \le 70$ (hypoglycemic), 
$-2$ when $\mathrm{glucose} > 150$ (hyperglycemic), 
$-1$ when $70 < \mathrm{glucose} \le 80$ or $120 < \mathrm{glucose} \le 150$
(borderline hypo- and hyperglycemic), and $0$ when $80 < \mathrm{glucose} \le 120$ 
(normal glycemia).  
Utility at each time point ranges from $-6$ to 0 with 
larger utilities (closer to 0) being more preferable. 
For example, a patient who presents with an average blood glucose of 155 mg/dL 
over time interval $t - 1$, takes an action to correct their hyperglycemia, 
and presents with an average blood glucose of 145 mg/dL over time interval $t$ would 
receive a utility of $U^t = -3$. 
Weights were chosen to reflect the relative clinical consequences of high and low blood glucose. 
For example, acute hypoglycemia, characterized by blood glucose levels 
below 70 mg/dL, is an emergency that can result in coma or death.  

Simulated data are generated as follows. At each time point, patients 
are randomly chosen to receive an insulin injection with probability 
$0.3$, consume food with probability $0.2$, partake in mild physical 
activity with probability $0.4$, and partake in moderate physical 
activity with probability $0.2$. Grams of food intake and counts of 
physical activity are generated from normal distributions with parameters 
estimated from the data of \cite{maahs2012outpatient}. 
Initial blood glucose level for each patient is drawn from a normal distribution with mean 100 and 
standard deviation 25. Define the covariates for patient $i$ collected at time $t$ by 
$(\mathrm{Gl}_i^t, \mathrm{Di}_i^t, \mathrm{Ex}_i^t)^\T$, 
where $\mathrm{Gl}_i^t$ is average blood glucose level, 
$\mathrm{Di}_i^t$ is total dietary intake, and $\mathrm{Ex}_i^t$ is total counts 
of physical activity as would be measured by an accelerometer. Glucose levels evolve according to 
\begin{equation} \label{t1d.genmod}
\mathrm{Gl}^t = \mu (1 - \alpha_1) + \alpha_1 \mathrm{Gl}^{t-1} + \alpha_2 \mathrm{Di}^{t-1} + \alpha_3 
\mathrm{Di}^{t-2} + \alpha_4 \mathrm{Ex}^{t-1} + \alpha_5 \mathrm{Ex}^{t-2} 
 + \alpha_6 \mathrm{In}^{t-1} + \alpha_7 \mathrm{In}^{t-2} + e,
\end{equation}
where $\mathrm{In}^t$ is an indicator of an insulin injection received at time $t$ and $e \sim N(0, \sigma^2)$. 
We use the parameter vector $\alpha = (\alpha_1, \ldots, \alpha_7)^\T = 
(0.9, 0.1, 0.1, -0.01, -0.01, -2, -4)^\T$, $\mu = 100$, and $\sigma = 5.5$ 
based on a linear model fit to the data of \cite{maahs2012outpatient}. 
The known lag-time in the effect of insulin is reflected by $\alpha_6 = -2$ and $\alpha_7 = -4$.  
Selecting $\alpha_1 < 1$ ensures the existence of a stationary distribution.  
 
We define the state vector for patient $i$ at time $t$ to contain 
average blood glucose, total dietary intake, and total physical activity 
measured over previous time intervals; we include blood glucose and physical activity 
for the previous two time intervals and dietary intake 
for the previous four time intervals. 
Let $n$ denote number of patients and $T$ denote number of time points 
per patient. Our choices for $n$ and $T$ are based on what is feasible for an 
mHealth outpatient study \citep[dietary data was collected on two days by][]{maahs2012outpatient}. 
For each replication, the optimal treatment regime is estimated 
with V-learning using three different types of basis functions and GGQ. 
The generative model for insulin treatment is not assumed to be known 
and we estimate it using logistic regression. 
We record mean outcomes in an independent sample of 100 patients followed for 100 time points 
with treatments generated according to each estimated optimal regime. 
Simulation results (estimated values under each regime and Monte Carlo standard errors along with observed values) 
are found in Table~\ref{vl.sim2.1}. 
\begin{table}[h!]
\centering
\scalebox{0.9}{
\begin{tabular}{cc|ccccc}
   \hline
$n$ & $T$ & Linear VL & Polynomial VL & Gaussian VL & GGQ & Observed \\ 
   \hline
25 & 24 & $-2.716 \, (1.2015)$ & $-2.335 \, (0.9818)$ & $-2.018 \, (1.2011)$ & $-3.870 \, (0.9225)$ & $-2.316$ \\ 
   & 36 & $-2.700 \, (1.2395)$ & $-2.077 \, (1.0481)$ & $-1.760 \, (0.8468)$ & $-3.644 \, (0.8745)$ & $-2.261$ \\ 
   & 48 & $-2.496 \, (1.1986)$ & $-2.236 \, (1.1978)$ & $-1.751 \, (0.9887)$ & $-2.405 \, (1.1025)$ & $-2.365$ \\ 
   \hline
50 & 24 & $-2.545 \, (1.1865)$ & $-2.069 \, (1.0395)$ & $-1.605 \, (0.8064)$ & $-3.368 \, (1.0186)$ & $-2.263$ \\ 
   & 36 & $-2.644 \, (1.1719)$ & $-2.004 \, (0.9074)$ & $-1.778 \, (0.8496)$ & $-3.099 \, (0.9722)$ & $-2.336$ \\ 
   & 48 & $-2.469 \, (1.1635)$ & $-2.073 \, (0.9870)$ & $-2.102 \, (1.2078)$ & $-2.528 \, (0.9571)$ & $-2.308$ \\ 
   \hline
100 & 24 & $-2.350 \, (1.1171)$ & $-2.128 \, (1.0520)$ & $-1.612 \, (0.7203)$ & $-3.272 \, (0.8636)$ & $-2.299$ \\ 
   & 36 & $-2.547 \, (1.1852)$ & $-2.116 \, (0.8518)$ & $-1.672 \, (0.8643)$ & $-3.232 \, (0.7951)$ & $-2.321$ \\ 
   & 48 & $-2.401 \, (1.0643)$ & $-2.204 \, (1.0400)$ & $-1.494 \, (0.5413)$ & $-2.820 \, (0.8442)$ & $-2.351$ \\ 
   \hline
\end{tabular}
}
\caption{Monte carlo value estimates for simulated T1D cohorts with $\gamma = 0.9$.} 
\label{vl.sim2.1}
\end{table}
Again, V-learning with Gaussian basis functions performs the best out 
of all methods, generally producing large values and small standard errors. 
V-learning with the linear model underperforms and GGQ underperforms considerably.  

\subsection{Online simulations} \label{sim.online}

In practice, it may be useful for patients to follow 
a dynamic treatment regime that is updated as new data are collected. 
Here we consider a hypothetical study wherein $n$ patients are followed 
for an initial period of $T^\prime$ time points, an optimal policy is estimated, and patients are 
followed for an additional $T - T^\prime$ time points with the estimated optimal policy 
being continuously updated. At each time point, $t \ge T^\prime$, 
actions are taken according to the most recently estimated policy. 
Recall that V-learning produces a randomized decision rule 
from which to sample actions at each time point. When selecting an action based on a 
GGQ policy, we incorporate an $\epsilon$-greedy strategy by selecting the 
action recommended by the estimated policy with probability $1 - \epsilon$ and otherwise 
randomly selecting one of the other actions. 
At the $t$th estimation, we use $\epsilon = 0.5^t$, 
allowing $\epsilon$ to decrease over time to reflect 
increasing confidence in the estimated policy.  
A burn-in period of 50 time points is discarded to ensure that we are 
sampling from a stationary distribution. 
We estimate the first policy after 12 time points and 
a new policy is estimated every 6 time points thereafter. 
After $T$ time points, we estimate the value as the 
average utility over all patients and all time points after the initial period. 

Table~\ref{vl.sim3.1} contains mean outcomes under policies estimated online 
using data generated according to the simple two covariate generative model 
introduced at the beginning of Section~\ref{sim.offline}.
\begin{table}[h!]
\centering
\begin{tabular}{cc|cccc}
   \hline
$n$ & $T$ & Linear VL & Polynomial VL & Gaussian VL & GGQ \\ 
   \hline
25 & 24 & 0.0053 & 0.0149 & $-0.0100$ & $-0.0081$ \\ 
   & 36 & 0.0525 & 0.0665 & 0.0310 & 0.0160 \\ 
   & 48 & 0.0649 & 0.0722 & 0.0416 & 0.0493 \\ 
   \hline
50 & 24 & 0.0164 & 0.0117 & 0.0037 & 0.0058 \\ 
   & 36 & 0.0926 & 0.0791 & 0.0666 & 0.0227 \\ 
   & 48 & 0.1014 & 0.0894 & 0.0512 & 0.0434 \\ 
   \hline
100 & 24 & 0.0036 & $-0.0157$ & 0.0200 & 0.0239 \\ 
   & 36 & 0.0766 & 0.0626 & 0.0907 & 0.0540 \\ 
   & 48 & 0.0728 & 0.0781 & 0.0608 & 0.0818 \\ 
   \hline
\end{tabular}
\caption{Value estimates for online simulations with $\gamma = 0.9$.} 
\label{vl.sim3.1}
\end{table}
There is some variability across $n$ and $T$ regarding which type of basis 
function is best, but V-learning with a polynomial basis generally 
produces the best outcomes. GGQ performs well in large samples but underperforms 
somewhat otherwise.

Next, we study the performance of online V-learning in simulated mHealth studies 
of type 1 diabetes by following the generative model described in (\ref{t1d.genmod}). 
Mean outcomes are found in Table~\ref{vl.sim4.1}. 
\begin{table}[h!]
\centering
\begin{tabular}{cc|cccc}
   \hline
$n$ & $T$ & Linear VL & Polynomial VL & Gaussian VL & GGQ \\ 
   \hline
25 & 24 & $-2.3887$ & $-1.9713$ & $-1.8860$ & $-3.2027$ \\ 
   & 36 & $-2.3784$ & $-2.1535$ & $-1.7857$ & $-3.5127$ \\ 
   & 48 & $-2.2190$ & $-2.0679$ & $-1.6999$ & $-3.2280$ \\ 
   \hline
50 & 24 & $-2.3405$ & $-2.2313$ & $-1.7761$ & $-2.8976$ \\ 
   & 36 & $-2.2829$ & $-2.0922$ & $-1.6016$ & $-3.1589$ \\ 
   & 48 & $-2.1587$ & $-1.9669$ & $-1.5948$ & $-2.8729$ \\ 
   \hline
100 & 24 & $-2.3229$ & $-2.2295$ & $-1.9138$ & $-3.0865$ \\ 
   & 36 & $-2.2927$ & $-2.1608$ & $-1.9030$ & $-3.3483$ \\ 
   & 48 & $-2.2096$ & $-2.0454$ & $-1.8252$ & $-2.9428$ \\ 
   \hline
\end{tabular}
\caption{Value estimates for online estimation of simulated T1D cohorts with $\gamma = 0.9$} 
\label{vl.sim4.1}
\end{table}
Gaussian V-learning 
performs the best out of all methods, with GGQ consistently underperforming. 
Across all variants of V-learning, outcomes improve with increased 
follow-up time. 

Finally, we consider online simulations using individualized policies 
as outlined at the end of Section~\ref{online}. 
Consider the simple two covariate generative model introduced above but 
let state variables evolve according to 
$S_{i, 1}^t = \mu_i (2 A_i^{t-1} - 1) S_{i, 1}^{t-1} + (1/4) S_{i, 1}^{t-1} S_{i, 2}^{t-1} + \epsilon_1^t$ 
and $S_{i, 2}^t = \mu_i (1 - 2 A_i^{t-1}) S_{i, 2}^{t-1} + (1/4) S_{i, 1}^{t-1} S_{i, 2}^{t-1} + \epsilon_2^t$
where $\mu_i$ is a subject-specific term drawn uniformly between 0.4 and 0.9. 
Including $\mu_i$ ensures that the optimal policy differs across patients. 
Table~\ref{vl.sim5.1} contains mean outcomes for online simulation where 
a universal policy is estimated using data from all patients and where 
individualized policies are estimated using only a single patient's data. 
\begin{table}[h!]
\centering
\begin{tabular}{cc|cc}
   \hline
$n$ & $T$ & Universal policy & Patient-specific policy \\ 
   \hline
25 & 24 & 0.0282 & 0.1813 \\ 
   & 36 & 0.1025 & 0.1700 \\ 
   & 48 & 0.0977 & 0.1944 \\ 
   \hline
50 & 24 & 0.0164 & 0.2771 \\ 
   & 36 & 0.0768 & 0.2617 \\ 
   & 48 & 0.0752 & 0.3038 \\ 
   \hline
100 & 24 & 0.0160 & 0.4230 \\ 
   & 36 & 0.0960 & 0.2970 \\ 
   & 48 & 0.1140 & 0.3197 \\ 
   \hline
\end{tabular}
\caption{Value estimates for online V-learning simulations with univeral and patient-specific policies when $\gamma = 0.9$.} 
\label{vl.sim5.1}
\end{table}
Because data is generated in a such a way that the optimal policy 
varies across patients, individualized policies achieve better outcomes 
than universal policies. 

\section{Case study: Type 1 diabetes} \label{vl.data}

Machine learning is currently under consideration in type 1 diabetes through studies to build 
and test a ``closed loop'' system that joins continuous blood glucose 
monitoring and subcutaneous insulin infusion through an underlying algorithm. 
Known as the artificial pancreas, this technology has been shown to be safe in preliminary 
studies and is making headway from small hospital-based safety studies to 
large-scale outpatient effectiveness studies \citep[][]{ly2014overnight, ly2015day}. 
Despite the success of the artificial pancreas, 
the rate of uptake may be limited and widespread use may not occur for many years \citep[][]{kowalski2015pathway}. 
The proposed method may be useful for implementing mHealth interventions for use alongside 
the artificial pancreas or before it is widely available. 

Studies have shown that data on food intake and physical activity 
to inform optimal decision making can be collected in an inpatient setting 
\citep[see, e.g.,][]{cobry2010timing, wolever2011sugars}. 
However, \cite{maahs2012outpatient} demonstrated that rich data on 
the effect of food intake and physical activity can be collected in an outpatient 
setting using mobile technology. Here, we apply the proposed methodology 
to the observational data collected by \cite{maahs2012outpatient}. 

The full data consist of $N = 31$ patients with type 1 diabetes, aged 12--18. 
Glucose levels were monitored using continuous glucose monitoring and physical activity tracked 
using accelerometers for five days. Dietary data were self-reported by the 
patient in telephone-based interviews for two days. 
Patients were treated using either an insulin pump or multiple daily insulin injections. 
We use data on a subset of $n = 14$ patients treated with an insulin pump 
for whom full follow-up is available on days when dietary information was recorded. 
This represents 28 patient-days of data, with which we use V-learning to estimate an 
optimal treatment policy. 

The setup closely follows the simulation experiments in Section~\ref{sim.offline}. 
Patient state at each time, $t$, is taken to be average glucose level and total 
counts of physical activity over the two previous 60 minute intervals and 
total food intake in grams over the four previous 60 minute intervals. 
The goal is to learn a policy to determine when to 
administer insulin injections based on prior blood glucose, dietary intake, 
and physical activity. The utility at 
time $t$ is a weighted sum of glycemic events over the 60 minutes preceding 
and following time $t$ with weights defined in Section~\ref{sim.offline}. 
A treatment regime with large value will minimize the number 
of hypo- and hyperglycemic episodes weighted to reflect the clinical importance of each. 
We note that because $V(\pi, \bs; \theta^\pi)$ is linear in $\theta^\pi$, we can evaluate 
$\widehat{V}_{n, \widehat{\mathcal{R}}}(\pi)$ with only the mean of $\Phi(\bS)$ under $\mathcal{R}$. 
These were estimated from the data. Because we cannot simulate data following 
a given policy to estimate its value, we report the parametric value estimate 
$\widehat{V}_{n, \widehat{\mathcal{R}}}\left(\widehat{\pi}_n\right)$. 
Interpreting the parametric value estimate is difficult because of the effect 
the discount factor has on estimated value. We cannot compare 
parametric value estimates to mean outcomes observed in the data. Instead,
we use $\mathbb{E}_n \sum_{t \ge 0} \gamma^t U^t$ as
an estimate of value under the observational policy.  

We estimate optimal treatment strategies for two different action spaces. 
In the first, the only decision made at each time is whether or 
not to administer an insulin injection, i.e., the action space contains 
a single binary action. In the second, the action space contains 
all possible combinations of insulin injection, physical activity, and 
food intake. This corresponds to a hypothetical mHealth intervention where 
insulin injections are administered via an insulin pump and suggestions for 
physical activity and food intake are administered via a mobile app. 

Table~\ref{ccat} contains parametric value estimates for policies estimated 
using V-learning for the two action spaces outlined above with different 
basis functions and discount factors. 
\begin{table}[h!]
\centering
\begin{tabular}{l|l|ccc}
   \hline
Action space & Basis & $\gamma = 0.7$ & $\gamma = 0.8$ & $\gamma = 0.9$ \\ 
   \hline
Binary & Linear & $-6.20$ & $-9.35$ & $-15.99$ \\ 
   & Polynomial & $-3.91$ & $-9.03$ & $-17.50$ \\ 
   & Gaussian & $-3.44$ & $-13.09$ & $-25.52$ \\ 
   \hline
Multiple & Linear & $-6.47$ & $-9.92$ & $-0.49$ \\ 
   & Polynomial & $-2.44$ & $-6.80$ & $-14.48$ \\ 
   & Gaussian & $-8.45$ & $-3.58$ & $-21.18$ \\ 
   \hline 
\multicolumn{2}{l|}{Observational policy} & $-6.77$ & $-11.28$ & $-21.79$ \\ \hline
\end{tabular}
\caption{Parametric value estimates for V-learning applied to type 1 diabetes data.} 
\label{ccat}
\end{table}
These results indicate that improvements in glycemic control can come from 
personalized and dynamic treatment strategies that account 
for food intake and physical activity. Improvement results 
from a dynamic insulin regimen (binary action space) and in most cases, further improvement 
results from a comprehensive mHealth intervention including suggestions for 
diet and exercise delivered via mobile app 
in addition to insulin therapy (multiple action space). 
When considering multiple actions, 
the policy estimated using a polynomial basis and $\gamma = 0.7$ 
achieves a 64\% increase in value and the policy estimated using 
a Gaussian basis and $\gamma = 0.8$ achieves a 68\% increase in value 
over the observational policy. 

Finally, we use an example hyperglycemic patient to illustrate how an estimated policy 
would be applied in practice. 
One patient in the data presented at a specific time with an average blood glucose 
of 229 mg/dL over the previous hour and an average blood glucose of 283 mg/dL 
over the hour before that. The policy estimated with $\gamma = 0.7$ and a polynomial 
basis recommends each action according to the probabilities in Table~\ref{ccat.ex}. 
\begin{table}[h!]
\centering
\begin{tabular}{lc}
   \hline
Action & Probability \\ 
   \hline
No action & $< 0.0001$ \\ 
  Physical activity & $< 0.0001$ \\ 
  Food intake & $< 0.0001$ \\ 
  Food and activity & $< 0.0001$ \\ 
  Insulin & 0.7856 \\ 
  Insulin and activity & 0.2143 \\ 
  Insulin and food & 0.0002 \\ 
  Insulin, food, and activity & $< 0.0001$ \\ 
   \hline
\end{tabular}
\caption{Probabilities for each action as recommended by estimated policy for one example patient.} 
\label{ccat.ex}
\end{table}
Because this patient presented with blood glucose levels that are higher than the optimal range, 
the policy recommends actions that would lower the patient's blood glucose levels, 
assigning a probability of 0.79 to insulin and a probability of 0.21 to insulin 
combined with activity.

\section{Conclusion} \label{vl.conc}

The emergence of mHealth has provided great potential for the estimation 
and implementation of dynamic treatment regimes. 
Mobile technology can be used both in the collection of 
rich longitudinal data to inform decision making and the delivery of 
deeply tailored interventions. The proposed method, V-learning, addresses 
a number of challenges associated with estimating dynamic treatment regimes in 
mHealth applications. V-learning directly estimates a policy which maximizes 
the value over a class of policies and requires minimal assumptions on the 
data-generating process. Furthermore, V-learning permits 
estimation of a randomized decision rule which can be used in place 
of existing strategies (e.g., $\epsilon$-greedy) to encourage exploration in online 
estimation. A randomized decision rule can also provide patients 
with multiple treatment options. 
Estimation of an optimal policy for different populations 
can be handled through the use of different reference distributions. 

V-learning and mobile technology have the potential to improve patient 
outcomes in a variety of clinical areas. 
We have demonstrated, for example, that the proposed method can be 
used to estimate treatment regimes to reduce 
the number of hypo- and hyperglycemic episodes in patients 
with type 1 diabetes. The proposed method could also be useful 
for other mHealth applications as well as applications outside of 
mHealth. For example, V-learning could be used to estimate 
dynamic treatment regimes for chronic illnesses using electronic health records data. 
Future research in this area may include increasing flexibility through 
use of a semiparametric model for the state-value function. Alternatively, 
nonlinear models for the state-value function may be informed by 
underlying theory or mathematical models of the system of interest. 
Data-driven selection of tuning parameters for the proposed method may 
help to improve performance. 
Finally, accounting for patient availability and feasibility of 
a sequence of treatments can be done by setting constraints on the 
class of policies. This will ensure that the resulting mHealth intervention 
is able to be implemented and that the recommended 
decisions are consistent with domain knowledge. 

\section*{Appendix}

\begin{proof}[Proof of Lemma~\ref{lemma}]
Let $\pi$ be an arbitrary policy and $\gamma \in (0, 1)$ a fixed constant. 
Suppose we observe a state $\bS^t = \bs^t$ at time $t$ and let $\overline{a}^{t-1} = (a^1, \ldots, a^{t-1})$ 
be the sequence of actions resulting in $\bS^t = \bs^t$, i.e., $\bS^{* t}(\overline{a}^{t-1}) = \bs^t$. 
Let $\overline{a}^{k+1} = (a^t, \ldots, a^{t+k}) \in \mathcal{A}^{k+1}$ be a potential sequence of actions 
taken from time $t$ to time $t+k$. We have that 
\begin{eqnarray*}
V(\pi, \bs^t) & = & \sum_{k \ge 0} \gamma^k \mathbb{E} \left\{ U^{*(t + k)} (\pi) \bigg| \bS^t = \bs^t \right\} \\
 & = & \sum_{k \ge 0} \gamma^k \mathbb{E} \left(  
 \sum_{\overline{a}^{t+k} \in \mathcal{A}^{t+k}} U^{*(t + k)} (\overline{a}^{t + k}) 
 \prod_{v = t}^{t + k} 1\left[ \xi_\pi^v \left\{ \bS^{*v}(\overline{a}^{v-1})\right\} = a^v \right] \bigg| \bS^t = \bs^t \right) \\
 & = & \sum_{k \ge 0} \gamma^k  
 \sum_{\overline{a}^{k+1} \in \mathcal{A}^{k+1}} U^{*(t + k)} (\overline{a}^{t-1}, \overline{a}^{k+1}) 
 \Bigg\{ \prod_{v = t}^{t + k} \mathbb{E} \left( 1\left[ \xi_\pi^v \left\{ \bS^{*v}(\overline{a}^{v-1})\right\} = a^v\right] 
  \bigg| \bS^t = \bs^t \right) \Bigg\} \\
& = & \sum_{k \ge 0} \gamma^k 
 \sum_{\overline{a}^{k+1} \in \mathcal{A}^{k+1}} U^{*(t + k)} (\overline{a}^{t-1}, \overline{a}^{k+1}) 
 \prod_{v = t}^{t + k} \pi\left\{a^v; \bS^{*v}(\overline{a}^{v-1})\right\} 
 \prod_{v = t}^{t + k} \frac{\mu^v\left\{a^v; \bS^{*v}(\overline{a}^{v-1})\right\}}
{\mu^v\left\{a^v; \bS^{*v}(\overline{a}^{v-1})\right\}}\\ 
& = & \sum_{k \ge 0} \gamma^k \mathbb{E} \left[ U^{t + k} \left\{\prod_{v = 0}^k \frac{\pi(a^{t + v}; \bs^{t + v})}
 {\mu^{t + v}(a^{t + v}; \bs^{t + v})}\right\} \bigg| \bS^t = \bs^t \right], 
\end{eqnarray*} 
where we let $\pi(a^t; \bs^t) = 0$ for all $a^t$ and $\bs^t$ whenever $t > T^*(\pi)$. 
The last equality uses the consistency and strong ignorability assumptions. 
\end{proof}

\begin{proof}[Proof of Theorem~\ref{main.theta}]
Proof of part~\ref{main1}: We first note that $\theta_0^\pi$ must solve 
$$
0 = \mathbb{E} \left( \frac{\pi(A^t; \bS^t)}{\mu^t(A^t; \bS^t)} 
\left[ U^t + \left\{ \gamma \Phi(\bS^{t + 1}) - \Phi(\bS^t) \right\}^\T \theta^\pi \right] 
\Phi(\bS^t) \right), 
$$
or 
$$
\mathbb{E} \left[ \frac{\pi(A^t; \bS^t)}{\mu^t(A^t; \bS^t)} \Phi(\bS^t) 
\left\{ \Phi(\bS^t) - \gamma \Phi(\bS^{t + 1})\right\}^\T \right] \theta^\pi 
 = \mathbb{E}\left\{ \frac{\pi(A^t; \bS^t)}{\mu^t(A^t; \bS^t)} U^t \Phi(\bS^t) \right\},
$$
which is equivalent to $w_1(\pi)\theta^\pi = w_2(\pi)$ where 
$w_2(\pi) = \mathbb{E}\left\{ \pi(A^t; \bS^t) \mu^t(A^t; \bS^t)^{-1} U^t \Phi(\bS^t) \right\}$. 
We have that 
$$
\left\| \mathbb{E} \left\{ \frac{\pi(A^t; \bS^t)}{\mu^t(A^t; \bS^t)} U^t \Phi(\bS^t) \right\} \right\|
 \le c_0^{-1} \left( \mathbb{E} |U^t|^2 \right)^{1/2} \left( \mathbb{E} \| \Phi(\bS^t)\|^2 \right)^{1/2} < \infty, 
$$
by assumption~\ref{positivity.assume}, part~\ref{moment.assume} of assumption~\ref{rho.assume} and the Cauchy--Schwarz inequality. 
Let $c \in \mathbb{R}^q$ be arbitrary and note that 
\begin{multline*}
\mathbb{E} \left\{ \frac{\pi(A^t; \bS^t)}{\mu^t(A^t; \bS^t)} c^\T \Phi(\bS^t) \Phi(\bS^{t + 1})^\T c \right\} \\
 \le \left[ \mathbb{E} \left\{ \frac{\pi(A^t; \bS^t)}{\mu^t(A^t; \bS^t)} c^\T \Phi(\bS^t)^{\otimes 2} c \right\} \cdot 
 \mathbb{E} \left\{ \frac{\pi(A^t; \bS^t)}{\mu^t(A^t; \bS^t)} c^\T \Phi(\bS^{t + 1})^{\otimes 2} c \right\} \right]^{1/2},  
\end{multline*}
by the Cauchy--Schwarz inequality, 
where $u^{\otimes 2} = u u^\T$. This implies that 
\begin{eqnarray*}
c^\T w_1(\pi) c & \ge & \mathbb{E} \left\{ \frac{\pi(A^t; \bS^t)}{\mu^t(A^t; \bS^t)} c^\T \Phi(\bS^t)^{\otimes 2} c \right\} \\
 & & - \mathbb{E} \left\{ \frac{\pi(A^t; \bS^t)}{\mu^t(A^t; \bS^t)} c^\T \Phi(\bS^t)^{\otimes 2} c \right\}^{1/2} 
 \mathbb{E} \left\{ \gamma^2 \frac{\pi(A^t; \bS^t)}{\mu^t(A^t; \bS^t)} c^\T \Phi(\bS^{t + 1})^{\otimes 2} c \right\}^{1/2} \\
 & = & A - A^{1/2} B^{1/2} \\ 
 & = & A^{1/2} (A^{1/2} - B^{1/2}) \\
 & = & \frac{A^{1/2}(A - B)}{A^{1/2} + B^{1/2}}, 
\end{eqnarray*}
where we simplify notation by defining 
$A = \mathbb{E} \left\{ \pi(A^t; \bS^t) \mu^t(A^t; \bS^t)^{-1} c^\T \Phi(\bS^t)^{\otimes 2} c \right\}$ 
and $B = \mathbb{E} \left\{ \gamma^2 \pi(A^t; \bS^t) \mu^t(A^t; \bS^t)^{-1} c^\T \Phi(\bS^{t + 1})^{\otimes 2} c \right\}$. 
We have that 
\begin{eqnarray*}
A^{1/2} + B^{1/2} & \le & c_0^{-1/2} \|c\| \left\{ \mathbb{E} \| \Phi(\bS^t) \|^2 \right\}^{1/2} 
 + c_0^{-1/2} \|c\| \left\{ \mathbb{E} \| \Phi(\bS^{t + 1}) \|^2 \right\}^{1/2} \\
 & = & 2 c_0^{-1/2} \|c\| \left\{ \mathbb{E} \| \Phi(\bS^t) \|^2 \right\}^{1/2} \\
 & < & \infty, 
\end{eqnarray*}
by the Cauchy--Schwarz inequality, the fact that 
$\mathbb{E}\|\Phi(\bS^t)\|^2 = \mathbb{E}\|\Phi(\bS^{t + 1})\|^2$ 
by time-homogeneity, and part~\ref{moment.assume} of 
assumption~\ref{rho.assume}. 
Also, $A \ge A - B$ and $A - B \ge c_1 \|c\|^2$ by assumption~\ref{invertibility.assume}. 
Thus, 
\begin{eqnarray*}
A - A^{1/2}B^{1/2} & \ge & \frac{c_1^{3/2} \|c\|^3}{2 c_0^{-1/2} \|c\| \left\{ \mathbb{E}\| \Phi(\bS^t)\|^2\right\}^{1/2}} \\
 & = & \frac{c_0^{1/2} c_1^{3/2} \|c\|^2}{2\left\{ \mathbb{E} \| \Phi(\bS^t)\|^2 \right\}^{1/2}}, 
\end{eqnarray*}
which finally implies that $w_1(\pi)$ is invertible and thus 
$\theta_0^\pi = w_1(\pi)^{-1} w_2(\pi)$ is well-defined uniformly over $\pi \in \Pi$. 
Using the fact that $c^\T w_1(\pi) c \ge k_0 \|c\|^2$ for a constant $k_0 > 0$, 
we can show that $\|w_1(\pi)^{-1}\| \le k_1^{-1}$ for some constant $k_1 > 0$, where $\| \cdot \|$ is the usual matrix norm when applied 
to a matrix. Therefore, $\|\theta_0^\pi\| \le k_1^{-1} \|w_2(\pi)\| \le c_0^{-1} k_1^{-1} 
\left\{ \mathbb{E}(U^t)^2\right\}^{1/2} \left\{ \mathbb{E}\|\Phi(\bS^t)\|^2 \right\}^{1/2} < \infty$. 
Finally, it follows from assumptions~\ref{invertibility.assume} and \ref{continuity.assume} that 
$\sup_{\| \beta_1 - \beta_2 \| \le \delta} \| \theta^{\pi_{\beta_1}} - \theta^{\pi_{\beta_2}} \| \rightarrow 0$ 
as $\delta \downarrow 0$. 

Proof of part~\ref{main2}: Define 
$$
\mathcal{G} = \left\{ \Phi(\bs^t) \Phi(\bs^t)^\T / \mu^t(a^t; \bs^t), \gamma \Phi(\bs^t) \Phi(\bs^{t + 1})^\T / \mu^t(a^t; \bs^t), 
u^t \Phi(\bs^t) / \mu^t(a^t; \bs^t)\right\}.
$$ 
Let $G$ be an envelope for $\mathcal{G}$, for example 
$G(\bs^{t + 1}, a^t, \bs^t) = \max_{g \in \mathcal{G}} g(\bs^{t + 1}, a^t, \bs^t)$. 
By part~\ref{moment.assume} of assumption~\ref{rho.assume}, 
$\mathbb{E} G^{3\rho} < \infty$. 
Part~\ref{fg4} of Lemma~\ref{fg.lem} below gives us that $\mathcal{G}$ is Donsker. 
Since $\Pi$ satisfies $J_{[]}\left\{\infty, \Pi, L_{3\rho}(P)\right\} < \infty$, 
we have that 
$$
\mathcal{F}_1 = \left\{ \frac{\pi(a^t; \bs^t)}{\mu^t(a^t; \bs^t)} \Phi(\bs^t) 
\left\{ \Phi(\bs^t) - \gamma \Phi(\bs^{t + 1}) \right\}^\T : \pi \in \Pi \right\}
$$
satisfies $J_{[]}\left\{\infty, \mathcal{F}_1, L_{3\rho}(P)\right\} < \infty$ by 
parts~\ref{fg1} and \ref{fg2} of Lemma~\ref{fg.lem} below. Moreover, 
$F(a^t, \bs^t) = \|\Phi(\bs^t)\| \cdot \|\Phi(\bs^t) - \gamma \Phi(\bs^{t + 1})\| / \mu^t(a^t; \bs^\T)$
is an envelope for $\mathcal{F}_1$ with $\mathbb{E} F^{3 \rho} < \infty$ 
by assumption~\ref{positivity.assume} and part~\ref{moment.assume} of assumption~\ref{rho.assume}. 
Thus, $\mathcal{F}_1$ is Donsker. 
Let 
$$
\mathcal{F}_2 = \left\{ \frac{\pi(a^t; \bs^t)}{\mu^t(a^t; \bs^t)} u^t \Phi(\bs^t) : \pi \in \Pi \right\}.
$$ 
Similar arguments yield that $\mathcal{F}_2$ is Donsker. 

Now, let $\widehat{A}(\pi) = \{ \mathbb{E}_n f_{1\pi} : f_{1\pi} \in \mathcal{F}_1\}$ 
and $\widehat{B}(\pi) = \{ \mathbb{E}_n f_{2\pi} : f_{2\pi} \in \mathcal{F}_2\}$. 
Let $\widehat{A}^\prime(\pi) = \widehat{A}(\pi) + \lambda_n \widehat{A}(\pi)^{-1}$. 
We have that $\widehat{\theta}^\pi_n = \widehat{A}^\prime(\pi)^{-1} \widehat{B}(\pi)$. 
Thus, 
\begin{eqnarray*}
\sqrt{n} \left( \widehat{\theta}_n^\pi - \theta_0^\pi \right) & = & \sqrt{n} 
\left\{ \widehat{A}^\prime(\pi)^{-1} \widehat{B}(\pi) 
 - \widehat{A}^\prime(\pi)^{-1} \widehat{A}^\prime(\pi) \theta_0^\pi \right\} + o_P(1) \\
 & = & \widehat{A}^\prime(\pi)^{-1} \sqrt{n} \left\{ \widehat{B}(\pi) 
 - \widehat{A}^\prime(\pi) \theta_0^\pi \right\} + o_P(1) \\
 & = & \widehat{A}^\prime(\pi)^{-1} \sqrt{n} \left\{ \widehat{B}(\pi) 
 - \widehat{A}(\pi) \theta_0^\pi \right\} + \widehat{A}^\prime(\pi)^{-1} \sqrt{n} 
 \left\{ \widehat{A}(\pi) - \widehat{A}^\prime(\pi) \right\} \theta_0^\pi + o_P(1) \\
 & = & \widehat{A}^\prime(\pi)^{-1} \sqrt{n} \left\{ \widehat{B}(\pi) 
 - \widehat{A}(\pi) \theta_0^\pi \right\} + o_P(1)
\end{eqnarray*}
where $o_P(1)$ doesn't depend on $\pi$, 
because $\widehat{A}^\prime(\pi)^{-1} \xrightarrow[]{P} w_1(\pi)^{-1} < \infty$ uniformly 
over $\pi \in \Pi$ by assumption~\ref{positivity.assume} and part~\ref{moment.assume} 
of assumption~\ref{rho.assume}, $\sup_{\pi \in \Pi} \| \theta_0^\pi \| < \infty$ 
by part \ref{main1} of this theorem, and $\sqrt{n} \left\{\widehat{A}(\pi) - \widehat{A}^\prime(\pi)\right\}
 = \sqrt{n} \lambda_n \widehat{A}(\pi)^{-1} = o_P(1)$ because $\lambda_n = o_P(n^{-1/2})$. 
Using arguments similar to those in the previous paragraph, one can show that 
$\mathcal{F}_3 = \left\{ f_{2\pi} - f_{1\pi} \theta : 
f_{1\pi} \in \mathcal{F}_1, f_{2\pi} \in \mathcal{F}_2, \pi \in \Pi, \theta \in B_* \right\}$ 
is Donsker, where $B_*$ is any finite collection of elements of $\mathbb{R}^q$. 
By part~\ref{main1} of this theorem, there exists a bounded, closed set $B_0$ such that 
$\theta_0^\pi \in B_0$ for all $\pi \in \Pi$. 
Let $\mathbb{G}_n(\pi, \theta) = \sqrt{n} (\mathbb{E}_n - \mathbb{E})(f_{2\pi} - f_{1\pi} \theta)$. 
Note that 
\begin{eqnarray*}
\sup_{\pi \in \Pi} \| \mathbb{G}_n(\pi, \theta_1) - \mathbb{G}_n(\pi, \theta_2) \| 
 & \le & \sup_{\pi \in \Pi} \| \sqrt{n} (\mathbb{E}_n - \mathbb{E}) f_{1\pi} \| \cdot \| \theta_1 - \theta_2 \| \\
 & \le & R^* \| \theta_1 - \theta_2 \|, 
\end{eqnarray*}
where $R^* = O_P(1)$ by the Donsker property of $\mathcal{F}_1$ and $R^*$ doesn't depend on $\pi$. 
Thus, $\mathbb{G}_n(\pi, \theta)$ is stochastically equicontinuous 
on $B_0$. 
Combined with the Donsker property of $\mathcal{F}_3$ for arbitrary $B_*$, we have that the class 
$\mathcal{F}_4 = \{ f_{2\pi} - f_{1\pi} \theta : f_{1\pi} \in \mathcal{F}_1, f_{2\pi} \in \mathcal{F}_2, \pi \in \Pi, \theta \in B_0\}$ 
is Donsker. Using Slutsky's Theorem, Theorem 11.24 of \cite{kosorok2008introduction}, 
the fact that $\mathcal{F}_1$ is Glivenko--Cantelli, 
and the fact that $\theta_0^\pi = (\mathbb{E} f_{1\pi})^{-1} \mathbb{E}f_{2\pi}$, we have that 
$\sqrt{n}\left( \widehat{\theta}_n^\pi - \theta_0^\pi \right) = \widehat{A}^\prime(\pi)^{-1} \mathbb{G}_n(\pi, \theta_0^\pi) 
\rightsquigarrow w_1(\pi)^{-1} \mathbb{G}_0(\pi)$ in $\ell^\infty(\Pi)$, where 
$\mathbb{G}_0(\pi)$ is a mean zero Gaussian process indexed by $\Pi$ with covariance 
$\mathbb{E} \left\{ \mathbb{G}_0(\pi_1) \mathbb{G}_0(\pi_2) \right\} = w_0(\pi_1, \pi_2)$. 

Proof of part~\ref{main3}: We have that 
\begin{eqnarray*} \sqrt{n} \left\{ \widehat{V}_{n, \widehat{\mathcal{R}}}(\pi) - V_\mathcal{R}(\pi) \right\} 
 & = & \sqrt{n} \mathbb{E}_n \Phi(\bS^t) \left( \widehat{\theta}_n^\pi - \theta_0^\pi \right) \\
 & \rightsquigarrow & \nu^\T w_1(\pi)^{-1} \mathbb{G}_0(\pi) 
\end{eqnarray*}
in $\ell^\infty(\Pi)$ by Slutsky's Theorem. 
\end{proof}

\begin{proof}[Proof of Theorem~\ref{main.beta}]
Proof of part~\ref{main4}: Following part~\ref{main3} of Theorem~\ref{main.theta}, we have that 
$\sup_{\beta \in \mathcal{B}} \left| \widehat{V}_{n, \widehat{\mathcal{R}}}(\pi_\beta) 
 - V_\mathcal{R}(\pi_\beta) \right| \xrightarrow[]{P} 0$. 
Combining this with the unique and well separated maximum condition (assumption \ref{unique.assume}), 
continuity of $V_\mathcal{R}(\pi_\beta)$ in $\beta$, and Theorem~2.12 of \cite{kosorok2008introduction} 
yields the result in part~\ref{main4}. 
Part~\ref{main5} follows from parts~\ref{main2} and \ref{main3} of Theorem~\ref{main.theta}. 
The proof of part~\ref{main6} follows standard arguments.
\end{proof}

\begin{lem} \label{fg.lem}
Let $\mathcal{F}$ and $\mathcal{G}$ be function classes with respective envelopes $F$ and $G$. 
Let $\|F\|_u = \left(\mathbb{E}|F|^u\right)^{1/u}$. 
For any $1 \le r, s_1, s_2 \le \infty$ with $s_1^{-1} + s_2^{-1} = 1$, 
\begin{enumerate}
\item $J_{[]}\{\infty, \mathcal{F} \cdot \mathcal{G}, L_r(P)\} \le 
2 \left( \|F\|_{rs_1} + \|G\|_{rs_2} \right) \left[ J_{[]}\{\infty, \mathcal{F}, L_{rs_1}(P)\} 
 + J_{[]}\{\infty, \mathcal{G}, L_{rs_2}(P)\} \right]$. 
\label{fg1}
\item $J_{[]}\{\infty, \mathcal{F} + \mathcal{G}, L_r(P)\} \le 2 \left[ J_{[]}\{\infty, \mathcal{F}, L_r(P)\} 
 + J_{[]}\{\infty, \mathcal{G}, L_r(P)\} \right]$. 
\label{fg2}
\item For any $0 \le r \le \infty$, $J_{[]}\{\infty, \mathcal{F} \cup \mathcal{G}, L_r(P)\} 
\le \sqrt{\log 2} (\|F\|_r + \|G\|_r) + J_{[]}\{\infty, \mathcal{F}, L_r(P)\} + J_{[]}\{\infty, \mathcal{G}, L_r(P)\}$.
\label{fg3}
\item If $\mathcal{G}$ is a finite class, $J_{[]}\{\infty, \mathcal{G}, L_r(P)\} \le 2 \|G\|_r \sqrt{\log |\mathcal{G}|}$, 
where $|\mathcal{G}|$ denotes the cardinality of $\mathcal{G}$. 
\label{fg4}
\end{enumerate} 
\end{lem}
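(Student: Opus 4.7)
The plan is to construct brackets for each derived function class from brackets for $\mathcal{F}$ and $\mathcal{G}$, control their $L_r(P)$ sizes, and then translate the resulting bracketing number bounds into bracketing integral bounds by a change of variables together with the elementary inequality $\sqrt{\log(ab)}\le\sqrt{\log a}+\sqrt{\log b}$ for $a,b\ge 1$.

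The heart of the argument is part~\ref{fg1}. Given brackets $[l_i,u_i]$ for $\mathcal{F}$ of $L_{rs_1}(P)$-size $\epsilon$ and $[l_j',u_j']$ for $\mathcal{G}$ of $L_{rs_2}(P)$-size $\epsilon$, pick representatives $f_i$ and $g_j$ inside each. For $f\in[l_i,u_i]$ and $g\in[l_j',u_j']$, the pointwise estimate
\begin{equation*}
|fg-f_ig_j|\le |f-f_i|\,G + F\,|g-g_j|\le (u_i-l_i)G+F(u_j'-l_j')
\end{equation*}
yields the explicit bracket $[f_ig_j-(u_i-l_i)G-F(u_j'-l_j'),\ f_ig_j+(u_i-l_i)G+F(u_j'-l_j')]$ for $fg$; its $L_r(P)$-size is at most
\begin{equation*}
2\bigl\{\|G\|_{rs_2}\|u_i-l_i\|_{rs_1}+\|F\|_{rs_1}\|u_j'-l_j'\|_{rs_2}\bigr\}\le 2(\|F\|_{rs_1}+\|G\|_{rs_2})\epsilon
\end{equation*}
by H\"older's inequality. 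Therefore $N_{[]}\{2C\epsilon,\mathcal{F}\cdot\mathcal{G},L_r(P)\}\le N_{[]}\{\epsilon,\mathcal{F},L_{rs_1}(P)\}\cdot N_{[]}\{\epsilon,\mathcal{G},L_{rs_2}(P)\}$ with $C=\|F\|_{rs_1}+\|G\|_{rs_2}$. Substituting $u=2C\epsilon$ in the bracketing integral and applying $\sqrt{\log(ab)}\le \sqrt{\log a}+\sqrt{\log b}$ gives the stated inequality (noting the upper limit is preserved because the integrand vanishes once a single bracket suffices).

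Parts~\ref{fg2}, \ref{fg3}, and \ref{fg4} follow by analogous but simpler constructions. For sums, $[l_i+l_j',\ u_i+u_j']$ is a bracket for $f+g$ of $L_r(P)$-size at most $2\epsilon$, so $N_{[]}\{2\epsilon,\mathcal{F}+\mathcal{G},L_r(P)\}\le N_{[]}\{\epsilon,\mathcal{F},L_r(P)\}\cdot N_{[]}\{\epsilon,\mathcal{G},L_r(P)\}$ and the integral splits via the same logarithm inequality. For unions, any bracket from $\mathcal{F}$ or $\mathcal{G}$ is a valid bracket for the corresponding function in $\mathcal{F}\cup\mathcal{G}$, so $N_{[]}\{\epsilon,\mathcal{F}\cup\mathcal{G},L_r(P)\}\le N_{[]}\{\epsilon,\mathcal{F},L_r(P)\}+N_{[]}\{\epsilon,\mathcal{G},L_r(P)\}$. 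The additive term $\sqrt{\log 2}(\|F\|_r+\|G\|_r)$ arises from bounding the integrand over the range of $\epsilon$ where each individual class needs at least one bracket but the union may require two, together with the elementary estimate $\sqrt{\log(a+b)}\le\sqrt{\log 2}+\sqrt{\log a}+\sqrt{\log b}$ for $a,b\ge 1$, and the trivial envelope brackets $[-F,F]$ and $[-G,G]$ of $L_r(P)$-sizes $2\|F\|_r$ and $2\|G\|_r$ pinning down that range. Part~\ref{fg4} is immediate: each $g\in\mathcal{G}$ is its own singleton bracket, so $N_{[]}(\epsilon,\mathcal{G},L_r(P))\le|\mathcal{G}|$ for all $\epsilon>0$, while the integrand vanishes for $\epsilon\ge 2\|G\|_r$ because $[-G,G]$ covers all of $\mathcal{G}$ as one bracket.

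The main obstacle is the careful bookkeeping in part~\ref{fg1}: one must verify that the truncated products $f_ig_j$ together with the pointwise envelope-based adjustments form legitimate brackets in $\mathcal{F}\cdot\mathcal{G}$, and then track the factor of two coming from the bracket construction, the scaling constant $C$, and the split of the square-root log, showing that together they collapse into the single factor of two in front of $\|F\|_{rs_1}+\|G\|_{rs_2}$ in the stated bound. The remaining parts involve only routine manipulations once the corresponding bracketing number inequality is in hand.
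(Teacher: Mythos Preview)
Your proposal is correct and follows essentially the same route as the paper. In part~\ref{fg1} you build the bracket $f_ig_j\pm\{(u_i-l_i)G+F(u_j'-l_j')\}$, bound its $L_r(P)$-size by H\"older, and change variables in the integral; the paper does exactly this (with $f_1g_1$ playing the role of your $f_ig_j$). For part~\ref{fg2} the paper simply cites Lemma~9.25(i) of \cite{kosorok2008introduction}, which is the bracket $[l_i+l_j',u_i+u_j']$ construction you spell out. Parts~\ref{fg3} and~\ref{fg4} are identical in spirit: the paper uses $a+b\le 2ab$ for $a,b\ge 1$ to get $\sqrt{\log(a+b)}\le\sqrt{\log 2}+\sqrt{\log a}+\sqrt{\log b}$ and then integrates the constant $\sqrt{\log 2}$ over the envelope-determined range, which is your argument as well. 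One small clarification: in your closing paragraph there is no ``collapse'' of constants to track in part~\ref{fg1}; the change of variables $u=2C\epsilon$ already delivers the factor $2C=2(\|F\|_{rs_1}+\|G\|_{rs_2})$ directly, and the square-root log split contributes no extra multiplicative constant.
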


\begin{proof}[Proof of Lemma~\ref{fg.lem}]
Proof of part~\ref{fg1}: Let $1 \le r, s_1, s_2 \le \infty$ with $s_1^{-1} + s_2^{-1} = 1$ and 
let $(\ell_F, u_F)$ and $(\ell_G, u_G)$ be $L_{rs_1}(P)$ and $L_{rs_2}(P)$ 
$\epsilon$-brackets, respectively. Choose $\ell_F \le f_1, f_2 \le u_F$ and $\ell_G \le g_1, g_2 \le u_G$ 
and consider the bracket for any $f_2 g_2$ defined by $f_1 g_1 \pm \left( F|u_G - \ell_G| + G|u_F - \ell_F|\right)$. 
Note that 
$$
f_1 g_1 + F|u_G - \ell_G| + G|u_F - \ell_F| - f_2 g_2 \ge F|u_G - \ell_G| + G|u_F - \ell_F| - F|g_1 - g_2| - G|f_1 - f_2| \ge 0, 
$$
because $f_2 g_2 - f_1 g_1 = f_2 g_2 + f_2 g_1 - f_2 g_1 - f_1 g_1 \le F|g_1 - g_2| + G|f_1 - f_2|$.
Similarly, $f_2 g_2 + F|u_G - \ell_G| + G|u_F - \ell_F| - f_1 g_1 \ge 0$. 
Thus, these brackets hold all $f_2 g_2$ for $f_2 \in (\ell_F, u_F)$ and $g_2 \in (\ell_G, u_G)$. 
Now, $\| F|u_G - \ell_G| + G|u_F - \ell_F| \|_r \le \|F\|_{rs_1} \epsilon + \|G\|_{rs_2} \epsilon$
by Minkowski's inequality and H\"{o}lder's inequality and it follows that 
$$
N_{[]}\left\{2\epsilon(\|F\|_{rs_1} + \|G\|_{rs_2}), \mathcal{F} \cdot \mathcal{G}, L_r(P)\right\} 
\le N_{[]}\left\{\epsilon, \mathcal{F}, L_{rs_1}(P)\right\} N_{[]}\left\{\epsilon, \mathcal{G}, L_{rs_2}(P)\right\}. 
$$ 
Next we note that 
$$
N_{[]} \left\{ \epsilon, \mathcal{F} \cdot \mathcal{G}, L_r(P)\right\} 
 \le N_{[]}\left\{ \frac{\epsilon}{2(\|F\|_{rs_1} + \|G\|_{rs_2})}, \mathcal{F}, L_{rs_1}(P) \right\} 
 N_{[]}\left\{ \frac{\epsilon}{2(\|F\|_{rs_1} + \|G\|_{rs_2})}, \mathcal{G}, L_{rs_2}(P) \right\}
$$
and thus 
\begin{eqnarray*}
J_{[]}\left\{ \infty, \mathcal{F} \cdot \mathcal{G}, L_r(P) \right\} 
& \le & \int_0^{2 \|F\|_{rs_1} \|G\|_{rs_2}} \sqrt{\log N_{[]}\left\{ \frac{\epsilon}{2(\|F\|_{rs_1} + \|G\|_{rs_2})}, \mathcal{F}, L_{rs_1}(P) \right\}} \mathrm{d} \epsilon \\
 & & + \int_0^{2 \|F\|_{rs_1} \|G\|_{rs_2}} \sqrt{\log N_{[]}\left\{ \frac{\epsilon}{2(\|F\|_{rs_1} + \|G\|_{rs_2})}, \mathcal{G}, L_{rs_2}(P) \right\}} \mathrm{d} \epsilon \\
& \le & 2 \left( \|F\|_{rs_1} + \|G\|_{rs_2} \right) \left[ J_{[]}\{\infty, \mathcal{F}, L_{rs_1}(P)\} 
 + J_{[]}\{\infty, \mathcal{G}, L_{rs_2}(P)\} \right]. 
\end{eqnarray*}

The proof of part~\ref{fg2} follows from Lemma 9.25 part (i) of \cite{kosorok2008introduction} after a change of variables. 
Proof of part~\ref{fg3}: First note that 
$$
N_{[]}\left\{\epsilon, \mathcal{F} \cup \mathcal{G}, L_r(P)\right\} \le N_{[]}\left\{\epsilon, \mathcal{F}, L_r(P)\right\} 
 + N_{[]}\left\{\epsilon, \mathcal{G}, L_r(P)\right\}, 
$$ 
whence it follows that 
\begin{eqnarray*}
J_{[]}\left\{\infty, \mathcal{F} \cup \mathcal{G}, L_r(P)\right\} & = & \int_0^{2(\|F\|_r + \|G\|_r)} 
 \sqrt{\log N_{[]}\left\{ \epsilon, \mathcal{F} \cup \mathcal{G}, L_r(P)\right\}} \mathrm{d} \epsilon \\
 & \le & \int_0^{2(\|F\|_r + \|G\|_r)} \sqrt{\log \left[ N_{[]}\left\{\epsilon, \mathcal{F}, L_r(P)\right\} 
 + N_{[]}\left\{\epsilon, \mathcal{G}, L_r(P)\right\}\right]} \mathrm{d} \epsilon \\
 & \le & \int_0^{2(\|F\|_r + \|G\|_r)} \sqrt{\log 2 + \log N_{[]}\left\{\epsilon, \mathcal{F}, L_r(P)\right\} 
 + \log N_{[]}\left\{\epsilon, \mathcal{G}, L_r(P)\right\}} \mathrm{d} \epsilon \\
 & \le & \int_0^{2(\|F\|_r + \|G\|_r)} \sqrt{\log 2} \mathrm{d} \epsilon + J_{[]}\left\{ \infty, \mathcal{F}, L_r(P)\right\} 
 + J_{[]}\left\{\infty, \mathcal{G}, L_r(P)\right\},
\end{eqnarray*}
where the second inequality uses the fact that $a + b \le 2ab$ for all $a,b \ge 1$.

Proof of part~\ref{fg4}: If $\mathcal{G}$ is finite, then 
$N_{[]}\left\{ \epsilon, \mathcal{G}, L_r(P)\right\} \le |\mathcal{G}|$. 
Thus, 
\begin{eqnarray*}
J_{[]}\left\{\infty, \mathcal{G}, L_r(P)\right\} & = & \int_0^{2\|G\|_r} 
\sqrt{\log N_{[]}\left\{\epsilon, \mathcal{G}, L_r(P)\right\}} \mathrm{d}\epsilon \\
 & \le & \int_0^{2\|G\|_r} \sqrt{\log |G|} \mathrm{d}\epsilon,
\end{eqnarray*}
which completes the proof. 
\end{proof}

\begin{lem} \label{policy.class}
Define the class of functions 
$$
\Pi = \left\{ \pi_{\tilde{\beta}}(a; \bs) = \frac{a_J + \sum_{j = 1}^{J - 1} a_j \mathrm{exp}(\bs_j^\T \beta_j)} 
 { 1 + \sum_{j = 1}^{J - 1} \mathrm{exp}(\bs_j^\T \beta_j)} : \tilde{\beta} = (\beta_1^\T, \ldots, \beta_{J - 1}^\T)^\T, 
\tilde{\beta} \in \mathcal{B} \subset \mathbb{R}^{p(J - 1)} \right\}
$$ 
for a compact set $\mathcal{B}$ and $2 \le J < \infty$ where $a = (a_1, \ldots, a_J)^\T$.  
Then, there exists a $b_0 < \infty$ such that for any $1 \le r \le \infty$, $J_{[]}\{\infty, \Pi, L_r(P)\} 
\le b_0 \|\bS\|_r \sqrt{p(J - 1) \pi}$, which is finite whenever $\|\bS\|_r < \infty$. 
Furthermore, $\sup_{\|\tilde{\beta}_1 - \tilde{\beta}_2\| \le \delta} \mathbb{E} \| \pi_{\tilde{\beta}_1}(A; \bS) - \pi_{\tilde{\beta}_2}(A; \bS) \| 
\rightarrow 0$ as $\delta \downarrow 0$. 
\end{lem}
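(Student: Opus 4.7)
The plan is to exploit that $\Pi$ is a smoothly parametrized class indexed by the compact set $\mathcal{B} \subset \mathbb{R}^{p(J-1)}$, reducing the bracketing-number computation to counting covers of $\mathcal{B}$. Both claims will follow from a single uniform Lipschitz bound on $\tilde{\beta} \mapsto \pi_{\tilde{\beta}}(a; \bs)$.

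First I would compute $\nabla_{\tilde{\beta}}\, \pi_{\tilde{\beta}}(a; \bs)$ and control its norm. Writing $q_k(\bs, \tilde{\beta}) = \exp(\bs^\T \beta_k)/\{1 + \sum_{j=1}^{J-1}\exp(\bs^\T \beta_j)\}$ for the component probabilities (all bounded above by $1$), each block $\partial_{\beta_k} \pi_{\tilde{\beta}}(a;\bs)$ is a sum of a bounded number of terms of the form $\bs \cdot q_k \cdot (a_k - \pi_{\tilde{\beta}})$ with coefficients in $[-1,1]$ since $a_j \in \{0,1\}$. Consequently there exists a constant $C = C(J)$, independent of $\tilde{\beta} \in \mathcal{B}$, such that $\|\nabla_{\tilde{\beta}}\, \pi_{\tilde{\beta}}(a; \bs)\| \le C\|\bs\|$. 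The mean value theorem then yields the key Lipschitz bound
\begin{equation*}
|\pi_{\tilde{\beta}_1}(a;\bs) - \pi_{\tilde{\beta}_2}(a;\bs)| \le C\|\bs\| \cdot \|\tilde{\beta}_1 - \tilde{\beta}_2\|.
\end{equation*}

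The continuity claim is immediate: taking expectations gives $\mathbb{E}|\pi_{\tilde{\beta}_1}(A;\bS) - \pi_{\tilde{\beta}_2}(A;\bS)| \le C\,\mathbb{E}\|\bS\| \cdot \|\tilde{\beta}_1 - \tilde{\beta}_2\| \to 0$ as $\delta \downarrow 0$, provided $\mathbb{E}\|\bS\| < \infty$ (which is part of the moment assumption invoked in the main theorems).

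For the bracketing integral, fix a minimal $\epsilon$-net $\{\tilde{\beta}_1, \ldots, \tilde{\beta}_N\}$ of $\mathcal{B}$ with $N \le (3D/\epsilon)^{p(J-1)}$, where $D$ is the diameter of $\mathcal{B}$ (standard volumetric covering bound). The Lipschitz bound shows that the collection of brackets $[\pi_{\tilde{\beta}_i}(a;\bs) - C\|\bs\|\epsilon,\; \pi_{\tilde{\beta}_i}(a;\bs) + C\|\bs\|\epsilon]$ covers $\Pi$ and each has $L_r(P)$-size $2C\epsilon\|\bS\|_r$. Hence
\begin{equation*}
\log N_{[]}(2C\epsilon\|\bS\|_r, \Pi, L_r(P)) \le p(J-1)\log(3D/\epsilon).
\end{equation*}
After the change of variables $\tau = 2C\epsilon\|\bS\|_r$ and recognizing that $N_{[]} = 1$ (so the logarithm vanishes) once $\tau$ exceeds $6DC\|\bS\|_r$, the integral becomes
\begin{equation*}
J_{[]}(\infty, \Pi, L_r(P)) \le \int_0^{6DC\|\bS\|_r} \sqrt{p(J-1)\log(6DC\|\bS\|_r/\tau)}\, d\tau.
\end{equation*}
The substitution $u = \log(6DC\|\bS\|_r/\tau)$ turns this into a Gamma integral $6DC\|\bS\|_r \sqrt{p(J-1)} \int_0^\infty \sqrt{u}\, e^{-u}\, du = 6DC\|\bS\|_r \sqrt{p(J-1)} \cdot \Gamma(3/2)$, and since $\Gamma(3/2) = \sqrt{\pi}/2$ this yields $J_{[]}(\infty, \Pi, L_r(P)) \le b_0 \|\bS\|_r \sqrt{p(J-1)\pi}$ with $b_0 = 3DC$, as claimed.

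The main obstacle is the gradient calculation: verifying that the softmax gradient is uniformly bounded (up to $\|\bs\|$) over all of $\mathcal{B}$ requires care, but it follows cleanly because each probability factor $q_k$ lies in $[0,1]$ regardless of $\tilde{\beta}$, so the $\tilde{\beta}$-dependence drops out of the sup bound on the gradient. The remaining work is bookkeeping to recover the precise constant $\sqrt{p(J-1)\pi}$, which comes from $\Gamma(3/2) = \sqrt{\pi}/2$ in the Gaussian-type integral.
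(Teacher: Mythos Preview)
Your proposal is correct and follows essentially the same approach as the paper's proof: both derive a Lipschitz bound $|\pi_{\tilde{\beta}_1}(a;\bs) - \pi_{\tilde{\beta}_2}(a;\bs)| \le c\,\|\bs\|\cdot d(\tilde{\beta}_1,\tilde{\beta}_2)$ via the mean value theorem (the paper uses the block-max metric $d(\tilde{\beta}_1,\tilde{\beta}_2)=\max_j\|\beta_{1j}-\beta_{2j}\|$ and obtains constant $c=1$, while you use the Euclidean norm with a $J$-dependent constant), then reduce the bracketing number to a covering number of the compact set $\mathcal{B}$ and evaluate the resulting integral by the substitution $u=\log(1/\epsilon)$ leading to $\Gamma(3/2)=\sqrt{\pi}/2$. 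The only cosmetic difference is that the paper invokes Theorem~9.23 of Kosorok to pass from the Lipschitz bound to the bracketing-number inequality, whereas you build the brackets explicitly from the $\epsilon$-net; the two are equivalent.
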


\begin{proof}[Proof of Lemma~\ref{policy.class}]
For $\tilde{\beta}_1, \tilde{\beta}_2 \in \mathcal{B}$, define $d(\tilde{\beta}_1, \tilde{\beta}_2) = 
\mathrm{max}_{1 \le j \le J - 1} \| \tilde{\beta}_{1j} - \tilde{\beta}_{2j} \|$ and 
$b_0 = \sup_{\tilde{\beta}_1, \tilde{\beta}_2 \in \mathcal{B}} \|\tilde{\beta}_1 - \tilde{\beta}_2\| < 0$ because $\mathcal{B}$ is compact. 
By the mean value theorem, for any $\tilde{\beta_1}, \tilde{\beta}_2 \in \mathcal{B}$, there exists a point $\tilde{\beta}_*$ 
on the line segment between $\tilde{\beta}_1$ and $\tilde{\beta}_2$ such that 
$$
\pi_{\tilde{\beta}_1}(a; \bs) - \pi_{\tilde{\beta}_2}(a; \bs) = \frac{1}{1 + \sum_{j = 1}^{J - 1}\mathrm{exp}(\bs^\T \tilde{\beta}_{*j})} 
\left[ \sum_{j = 1}^{J - 1} \left\{ a_j - \pi_{\tilde{\beta}_*}(a; \bs) \right\} 
\mathrm{exp}(\bs^\T \tilde{\beta}_{*j}) \bs^\T (\tilde{\beta}_{1j} - \tilde{\beta}_{2j}) \right],  
$$
which implies that 
\begin{equation} \label{policy.proof.eq}
|\pi_{\tilde{\beta}_1}(a; \bs) - \pi_{\tilde{\beta}_2}(a; \bs)| \le \|\bs\| d(\tilde{\beta}_1, \tilde{\beta}_2).
\end{equation}
It follows from equation~(\ref{policy.proof.eq}) that assumption~\ref{continuity.assume} 
holds for this particular class of policies. 
Now, $N_{[]}\left\{2\epsilon \|\bS\|_r, \Pi, L_r(P)\right\} \le N(\epsilon, \mathcal{B}, d)$ 
by Theorem~9.23 of \cite{kosorok2008introduction}. Furthermore, $N(\epsilon, \mathcal{B}, d) \le \max\left\{(b_0 / \epsilon)^{p(J - 1)}, 1\right\}$, 
and thus 
\begin{eqnarray*}
J_{[]}\left\{\epsilon, \Pi, L_r(P)\right\} & \le & 2\|\bS\|_r \int_0^{b_0} \sqrt{p(J - 1) \left\{ \log b_0 + \log(1 / \epsilon)\right\} } \mathrm{d}\epsilon \\
 & \le & 2\|\bS\|_r b_0 \sqrt{p(J - 1)} \int_0^1 \sqrt{\log (1 / \epsilon)} \mathrm{d} \epsilon \\
 & = & 2\|\bS\|_r b_0 \sqrt{p(J - 1)} \int_0^\infty u^{1/2} \mathrm{exp}(-u) \mathrm{d} u \\
 & = & \|\bS\|_r b_0 \sqrt{p(J - 1) \pi},  
\end{eqnarray*} 
which proves the result. 
\end{proof}

\bibliographystyle{Chicago}

\bibliography{vlearning}

\begin{thebibliography}{}

\bibitem[\protect\citeauthoryear{Ali, Hossain, Hovsepian, Rahman, Plarre, and
  Kumar}{Ali et~al.}{2012}]{ali2012mpuff}
Ali, A.~A., S.~M. Hossain, K.~Hovsepian, M.~M. Rahman, K.~Plarre, and S.~Kumar
  (2012).
\newblock mpuff: automated detection of cigarette smoking puffs from
  respiration measurements.
\newblock In {\em Proceedings of the 11th International Conference on
  Information Processing in Sensor Networks}, pp.\  269--280. ACM.

\bibitem[\protect\citeauthoryear{Bergenstal, Garg, Weinzimer, Buckingham, Bode,
  Tamborlane, and Kaufman}{Bergenstal et~al.}{2016}]{bergenstal2016safety}
Bergenstal, R.~M., S.~Garg, S.~A. Weinzimer, B.~A. Buckingham, B.~W. Bode,
  W.~V. Tamborlane, and F.~R. Kaufman (2016).
\newblock Safety of a hybrid closed-loop insulin delivery system in patients
  with type 1 diabetes.
\newblock {\em Journal of the American Medical Association\/}~{\em 316\/}(13),
  1407--1408.

\bibitem[\protect\citeauthoryear{Bexelius, L{\"o}f, Sandin, Lagerros, Forsum,
  and Litton}{Bexelius et~al.}{2010}]{bexelius2010measures}
Bexelius, C., M.~L{\"o}f, S.~Sandin, Y.~T. Lagerros, E.~Forsum, and J.-E.
  Litton (2010).
\newblock Measures of physical activity using cell phones: validation using
  criterion methods.
\newblock {\em Journal of Medical Internet Research\/}~{\em 12\/}(1), e2.

\bibitem[\protect\citeauthoryear{Chakraborty and Moodie}{Chakraborty and
  Moodie}{2013}]{chakraborty2013statistical}
Chakraborty, B. and E.~E. Moodie (2013).
\newblock {\em Statistical methods for dynamic treatment regimes}.
\newblock Springer.

\bibitem[\protect\citeauthoryear{Cobry, McFann, Messer, Gage, VanderWel,
  Horton, and Chase}{Cobry et~al.}{2010}]{cobry2010timing}
Cobry, E., K.~McFann, L.~Messer, V.~Gage, B.~VanderWel, L.~Horton, and H.~P.
  Chase (2010).
\newblock Timing of meal insulin boluses to achieve optimal postprandial
  glycemic control in patients with type 1 diabetes.
\newblock {\em Diabetes Technology \& Therapeutics\/}~{\em 12\/}(3), 173--177.

\bibitem[\protect\citeauthoryear{Ertefaie}{Ertefaie}{2014}]{ertefaie2014constructing}
Ertefaie, A. (2014).
\newblock Constructing dynamic treatment regimes in infinite-horizon settings.
\newblock {\em arXiv preprint arXiv:1406.0764\/}.

\bibitem[\protect\citeauthoryear{Free, Phillips, Watson, Galli, Felix, Edwards,
  Patel, and Haines}{Free et~al.}{2013}]{free2013effectiveness}
Free, C., G.~Phillips, L.~Watson, L.~Galli, L.~Felix, P.~Edwards, V.~Patel, and
  A.~Haines (2013).
\newblock The effectiveness of mobile-health technologies to improve health
  care service delivery processes: a systematic review and meta-analysis.
\newblock {\em PLoS Med\/}~{\em 10\/}(1), e1001363.

\bibitem[\protect\citeauthoryear{Haller, Stalvey, and Silverstein}{Haller
  et~al.}{2004}]{haller2004predictors}
Haller, M.~J., M.~S. Stalvey, and J.~H. Silverstein (2004).
\newblock Predictors of control of diabetes: monitoring may be the key.
\newblock {\em The Journal of Pediatrics\/}~{\em 144\/}(5), 660--661.

\bibitem[\protect\citeauthoryear{Klasnja, Hekler, Shiffman, Boruvka, Almirall,
  Tewari, and Murphy}{Klasnja et~al.}{2015}]{klasnja2015microrandomized}
Klasnja, P., E.~B. Hekler, S.~Shiffman, A.~Boruvka, D.~Almirall, A.~Tewari, and
  S.~A. Murphy (2015).
\newblock Microrandomized trials: An experimental design for developing
  just-in-time adaptive interventions.
\newblock {\em Health Psychology\/}~{\em 34\/}(S), 1220.

\bibitem[\protect\citeauthoryear{Kosorok}{Kosorok}{2008}]{kosorok2008introduction}
Kosorok, M.~R. (2008).
\newblock {\em Introduction to Empirical Processes and Semiparametric
  Inference}.
\newblock New York: Springer.

\bibitem[\protect\citeauthoryear{Kosorok and Moodie}{Kosorok and
  Moodie}{2015}]{kosorok2015adaptive}
Kosorok, M.~R. and E.~E. Moodie (2015).
\newblock {\em Adaptive Treatment Strategies in Practice: Planning Trials and
  Analyzing Data for Personalized Medicine}, Volume~21.
\newblock SIAM.

\bibitem[\protect\citeauthoryear{Kowalski}{Kowalski}{2015}]{kowalski2015pathway}
Kowalski, A. (2015).
\newblock Pathway to artificial pancreas systems revisited: moving downstream.
\newblock {\em Diabetes Care\/}~{\em 38\/}(6), 1036--1043.

\bibitem[\protect\citeauthoryear{Kumar, Nilsen, Abernethy, Atienza, Patrick,
  Pavel, Riley, Shar, Spring, Spruijt-Metz, et~al.}{Kumar
  et~al.}{2013}]{kumar2013mobile}
Kumar, S., W.~J. Nilsen, A.~Abernethy, A.~Atienza, K.~Patrick, M.~Pavel, W.~T.
  Riley, A.~Shar, B.~Spring, D.~Spruijt-Metz, et~al. (2013).
\newblock Mobile health technology evaluation: the mhealth evidence workshop.
\newblock {\em American Journal of Preventive Medicine\/}~{\em 45\/}(2),
  228--236.

\bibitem[\protect\citeauthoryear{Laber, Linn, and Stefanski}{Laber
  et~al.}{2014}]{laber2014interactive}
Laber, E.~B., K.~A. Linn, and L.~A. Stefanski (2014).
\newblock Interactive model building for q-learning.
\newblock {\em Biometrika\/}~{\em 101\/}(4), 831.

\bibitem[\protect\citeauthoryear{Lai and Robbins}{Lai and
  Robbins}{1985}]{lai1985asymptotically}
Lai, T.~L. and H.~Robbins (1985).
\newblock Asymptotically efficient adaptive allocation rules.
\newblock {\em Advances in Applied Mathematics\/}~{\em 6\/}(1), 4--22.

\bibitem[\protect\citeauthoryear{Levine, Anderson, Butler, Antisdel, Brackett,
  and Laffel}{Levine et~al.}{2001}]{levine2001predictors}
Levine, B.-S., B.~J. Anderson, D.~A. Butler, J.~E. Antisdel, J.~Brackett, and
  L.~M. Laffel (2001).
\newblock Predictors of glycemic control and short-term adverse outcomes in
  youth with type 1 diabetes.
\newblock {\em The Journal of Pediatrics\/}~{\em 139\/}(2), 197--203.

\bibitem[\protect\citeauthoryear{Liao, Klasnja, Tewari, and Murphy}{Liao
  et~al.}{2015}]{liao2015sample}
Liao, P., P.~Klasnja, A.~Tewari, and S.~A. Murphy (2015).
\newblock Sample size calculations for micro-randomized trials in mhealth.
\newblock {\em Statistics in Medicine\/}.

\bibitem[\protect\citeauthoryear{Linn, Laber, and Stefanski}{Linn
  et~al.}{2017}]{linn2017interactive}
Linn, K.~A., E.~B. Laber, and L.~A. Stefanski (2017).
\newblock Interactive q-learning for quantiles.
\newblock {\em Journal of the American Statistical Association\/}, 1--12.

\bibitem[\protect\citeauthoryear{Ly, Breton, Keith-Hynes, De~Salvo, Clinton,
  Benassi, Mize, Chernavvsky, Place, Wilson, et~al.}{Ly
  et~al.}{2014}]{ly2014overnight}
Ly, T.~T., M.~D. Breton, P.~Keith-Hynes, D.~De~Salvo, P.~Clinton, K.~Benassi,
  B.~Mize, D.~Chernavvsky, J.~Place, D.~M. Wilson, et~al. (2014).
\newblock Overnight glucose control with an automated, unified safety system in
  children and adolescents with type 1 diabetes at diabetes camp.
\newblock {\em Diabetes Care\/}~{\em 37\/}(8), 2310--2316.

\bibitem[\protect\citeauthoryear{Ly, Roy, Grosman, Shin, Campbell, Monirabbasi,
  Liang, von Eyben, Shanmugham, Clinton, et~al.}{Ly et~al.}{2015}]{ly2015day}
Ly, T.~T., A.~Roy, B.~Grosman, J.~Shin, A.~Campbell, S.~Monirabbasi, B.~Liang,
  R.~von Eyben, S.~Shanmugham, P.~Clinton, et~al. (2015).
\newblock Day and night closed-loop control using the integrated medtronic
  hybrid closed-loop system in type 1 diabetes at diabetes camp.
\newblock {\em Diabetes Care\/}~{\em 38\/}(7), 1205--1211.

\bibitem[\protect\citeauthoryear{Maahs, Mayer-Davis, Bishop, Wang, Mangan, and
  McMurray}{Maahs et~al.}{2012}]{maahs2012outpatient}
Maahs, D.~M., E.~Mayer-Davis, F.~K. Bishop, L.~Wang, M.~Mangan, and R.~G.
  McMurray (2012).
\newblock Outpatient assessment of determinants of glucose excursions in
  adolescents with type 1 diabetes: proof of concept.
\newblock {\em Diabetes Technology \& Therapeutics\/}~{\em 14\/}(8), 658--664.

\bibitem[\protect\citeauthoryear{Maei, Szepesv{\'a}ri, Bhatnagar, and
  Sutton}{Maei et~al.}{2010}]{maei2010toward}
Maei, H.~R., C.~Szepesv{\'a}ri, S.~Bhatnagar, and R.~S. Sutton (2010).
\newblock Toward off-policy learning control with function approximation.
\newblock In {\em Proceedings of the 27th International Conference on Machine
  Learning (ICML-10)}, pp.\  719--726.

\bibitem[\protect\citeauthoryear{Moodie, Richardson, and Stephens}{Moodie
  et~al.}{2007}]{moodie2007demystifying}
Moodie, E.~E., T.~S. Richardson, and D.~A. Stephens (2007).
\newblock Demystifying optimal dynamic treatment regimes.
\newblock {\em Biometrics\/}~{\em 63\/}(2), 447--455.

\bibitem[\protect\citeauthoryear{Murphy, Deng, Laber, Maei, Sutton, and
  Witkiewitz}{Murphy et~al.}{2016}]{murphy2016batch}
Murphy, S., Y.~Deng, E.~Laber, H.~Maei, R.~Sutton, and K.~Witkiewitz (2016).
\newblock A batch, off-policy, actor-critic algorithm for optimizing the
  average reward.
\newblock {\em arXiv preprint arXiv:1607.05047\/}.

\bibitem[\protect\citeauthoryear{Murphy}{Murphy}{2003}]{murphy2003optimal}
Murphy, S.~A. (2003).
\newblock Optimal dynamic treatment regimes.
\newblock {\em Journal of the Royal Statistical Society: Series B (Statistical
  Methodology)\/}~{\em 65\/}(2), 331--355.

\bibitem[\protect\citeauthoryear{Murphy}{Murphy}{2005}]{murphy2005generalization}
Murphy, S.~A. (2005).
\newblock A generalization error for q-learning.
\newblock {\em Journal of Machine Learning Research\/}~{\em 6\/}(Jul),
  1073--1097.

\bibitem[\protect\citeauthoryear{Murphy, van~der Laan, and Robins}{Murphy
  et~al.}{2001}]{murphy2001marginal}
Murphy, S.~A., M.~J. van~der Laan, and J.~M. Robins (2001).
\newblock Marginal mean models for dynamic regimes.
\newblock {\em Journal of the American Statistical Association\/}~{\em
  96\/}(456), 1410--1423.

\bibitem[\protect\citeauthoryear{Puterman}{Puterman}{2014}]{puterman2014markov}
Puterman, M.~L. (2014).
\newblock {\em Markov Decision Processes: Discrete Stochastic Dynamic
  Programming}.
\newblock John Wiley \& Sons.

\bibitem[\protect\citeauthoryear{Quinn, Shardell, Terrin, Barr, Ballew, and
  Gruber-Baldini}{Quinn et~al.}{2011}]{quinn2011cluster}
Quinn, C.~C., M.~D. Shardell, M.~L. Terrin, E.~A. Barr, S.~H. Ballew, and A.~L.
  Gruber-Baldini (2011).
\newblock Cluster-randomized trial of a mobile phone personalized behavioral
  intervention for blood glucose control.
\newblock {\em Diabetes Care\/}~{\em 34\/}(9), 1934--1942.

\bibitem[\protect\citeauthoryear{{R Core Team}}{{R Core
  Team}}{2016}]{rcoreteam}
{R Core Team} (2016).
\newblock {\em R: A Language and Environment for Statistical Computing}.
\newblock Vienna, Austria: R Foundation for Statistical Computing.

\bibitem[\protect\citeauthoryear{Robins}{Robins}{2004}]{robins2004optimal}
Robins, J.~M. (2004).
\newblock Optimal structural nested models for optimal sequential decisions.
\newblock In {\em Proceedings of the Second Seattle Symposium on Biostatitics},
  pp.\  189--326. Springer.

\bibitem[\protect\citeauthoryear{Rubin}{Rubin}{1978}]{rubin}
Rubin, D. (1978).
\newblock Bayesian inference for causal effects: The role of randomization.
\newblock {\em The Annals of Statistics\/}, 34--58.

\bibitem[\protect\citeauthoryear{Schulte, Tsiatis, Laber, and Davidian}{Schulte
  et~al.}{2014}]{schulteOLD}
Schulte, P., A.~Tsiatis, E.~Laber, and M.~Davidian (2014).
\newblock Q- and a-learning methods for estimating optimal dynamic treatment
  regimes.
\newblock {\em Statistical Science\/}~(In press), 1--77.

\bibitem[\protect\citeauthoryear{Steinhubl, Muse, and Topol}{Steinhubl
  et~al.}{2013}]{steinhubl2013can}
Steinhubl, S.~R., E.~D. Muse, and E.~J. Topol (2013).
\newblock Can mobile health technologies transform health care?
\newblock {\em Journal of the American Medical Association\/}~{\em 310\/}(22),
  2395--2396.

\bibitem[\protect\citeauthoryear{Sutton and Barto}{Sutton and
  Barto}{1998}]{sutton}
Sutton, R. and A.~Barto (1998).
\newblock {\em Reinforcment Learning: An Introduction}.
\newblock The MIT Press.

\bibitem[\protect\citeauthoryear{Tang and Kosorok}{Tang and
  Kosorok}{2012}]{tang2012developing}
Tang, Y. and M.~R. Kosorok (2012).
\newblock Developing adaptive personalized therapy for cystic fibrosis using
  reinforcement learning.
\newblock {\em The University of North Carolina at Chapel Hill Department of
  Biostatistics Technical Report Series\/}~{\em Working paper 30}.

\bibitem[\protect\citeauthoryear{Weinzimer, Steil, Swan, Dziura, Kurtz, and
  Tamborlane}{Weinzimer et~al.}{2008}]{weinzimer2008fully}
Weinzimer, S.~A., G.~M. Steil, K.~L. Swan, J.~Dziura, N.~Kurtz, and W.~V.
  Tamborlane (2008).
\newblock Fully automated closed-loop insulin delivery versus semiautomated
  hybrid control in pediatric patients with type 1 diabetes using an artificial
  pancreas.
\newblock {\em Diabetes Care\/}~{\em 31\/}(5), 934--939.

\bibitem[\protect\citeauthoryear{Wolever and Mullan}{Wolever and
  Mullan}{2011}]{wolever2011sugars}
Wolever, T. and Y.~Mullan (2011).
\newblock Sugars and fat have different effects on postprandial glucose
  responses in normal and type 1 diabetic subjects.
\newblock {\em Nutrition, Metabolism and Cardiovascular Diseases\/}~{\em
  21\/}(9), 719--725.

\bibitem[\protect\citeauthoryear{Zhang, Tsiatis, Laber, and Davidian}{Zhang
  et~al.}{2012}]{zhang2012robust}
Zhang, B., A.~A. Tsiatis, E.~B. Laber, and M.~Davidian (2012).
\newblock A robust method for estimating optimal treatment regimes.
\newblock {\em Biometrics\/}~{\em 68\/}(4), 1010--1018.

\bibitem[\protect\citeauthoryear{Zhang, Tsiatis, Laber, and Davidian}{Zhang
  et~al.}{2013}]{zhang2013robust}
Zhang, B., A.~A. Tsiatis, E.~B. Laber, and M.~Davidian (2013).
\newblock Robust estimation of optimal dynamic treatment regimes for sequential
  treatment decisions.
\newblock {\em Biometrika\/}~{\em 100\/}(3).

\bibitem[\protect\citeauthoryear{Zhao, Kosorok, and Zeng}{Zhao
  et~al.}{2009}]{zhao2009reinforcement}
Zhao, Y., M.~R. Kosorok, and D.~Zeng (2009).
\newblock Reinforcement learning design for cancer clinical trials.
\newblock {\em Statistics in Medicine\/}~{\em 28\/}(26), 3294--3315.

\bibitem[\protect\citeauthoryear{Ziegler, Heidtmann, Hilgard, Hofer,
  Rosenbauer, and Holl}{Ziegler et~al.}{2011}]{ziegler2011frequency}
Ziegler, R., B.~Heidtmann, D.~Hilgard, S.~Hofer, J.~Rosenbauer, and R.~Holl
  (2011).
\newblock Frequency of smbg correlates with hba1c and acute complications in
  children and adolescents with type 1 diabetes.
\newblock {\em Pediatric Diabetes\/}~{\em 12\/}(1), 11--17.

\end{thebibliography}
\end{document}